\pgfplotsset{compat=1.8}
\newtheorem{theorem}{Theorem}
\newtheorem{proposition}[theorem]{Proposition}
\newtheorem{lemma}[theorem]{Lemma}
\newtheorem{corollary}[theorem]{Corollary}
\newtheorem{assumption}{Assumption}
\newtheorem{definition}{Definition}[section]
\newtheorem{example}{Example}[section]
\theoremstyle{remark}
\newtheorem{remark}{Remark}[section]
\newcommand{\tphi}{{\tilde{\phi}}}
\newcommand{\kawv}{Kernel-AWV}
\newcommand{\pkawv}{PKAWV}
\newcommand{\deff}{d_{\textrm{eff}}}
\newcommand{\eqals}[1]{\begin{align*}#1\end{align*}}
\newcommand{\eqal}[1]{\begin{align}#1\end{align}}
\newcommand{\bpr}{\begin{proof}}
\newcommand{\epr}{\end{proof}}
\newcommand{\be}{\begin{equation}}
\newcommand{\ee}{\end{equation}}
\newcommand{\bd}{\begin{definition}}
\newcommand{\ed}{\end{definition}}
\newcommand{\bi}{\begin{itemize}}
\newcommand{\ei}{\end{itemize}}
\newcommand{\ba}{\begin{assumption}}
\newcommand{\ea}{\end{assumption}}
\newcommand{\bre}{\begin{restatable}}
\newcommand{\ere}{\end{restatable}}
\newcommand{\br}{\begin{remark}}
\newcommand{\er}{\end{remark}}
\newcommand{\bp}{\begin{proposition}}
\newcommand{\ep}{\end{proposition}}
\newcommand{\blm}{\begin{lemma}}
\newcommand{\elm}{\end{lemma}}
\newcommand{\bt}{\begin{theorem}}
\newcommand{\et}{\end{theorem}}
\newcommand{\bcor}{\begin{corollary}}
\newcommand{\ecor}{\end{corollary}}
\newcommand{\bex}{\begin{example}}
\newcommand{\eex}{\end{example}}
\newcommand{\cI}{\mathcal{I}}
\newcommand{\cH}{\mathcal{H}}
\newcommand{\cX}{\mathcal{X}}
\renewcommand{\hat}{\widehat}
\newcommand{\tCn}{\widetilde{C}_n}
\newcommand{\Kn}{K_{nn}}
\renewcommand{\ge}{\geq}
\renewcommand{\le}{\leq}
\newcommand{\indic}{\mathds{1}}
\crefname{assumption}{Assumption}{Assumptions}
\crefname{equation}{Eq.}{Eqs.}
\crefname{figure}{Fig.}{Figs.}
\crefname{table}{Table}{Tables}
\crefname{section}{Sec.}{Secs.}
\crefname{theorem}{Thm.}{Thms.}
\crefname{proposition}{Prop.}{Props.}
\crefname{lemma}{Lemma}{Lemmas}
\crefname{corollary}{Cor.}{Cors.}
\crefname{example}{Example}{Examples}
\crefname{appendix}{Appendix}{Appendixes}
\crefname{remark}{Remark}{Remark}
\newcommand{\Nystrom}[1]{{Nystr\"om}}
\providecommand{\scal}[2]{\left\langle{#1},{#2}\right\rangle}
\providecommand{\nor}[1]{\bigl\|{#1}\bigr\|}
\providecommand{\tr}{\operatorname{Tr}}
\newcommand{\R}{\mathbb R}
\newcommand{\N}{\mathbb N}
\newcommand{\hh}{\mathcal H}
\newcommand{\la}{\lambda}
\newcommand{\argmin}[1]{\mathop{\operatorname{argmin}}_{#1}}
\newcommand{\X}{{\cal X}}
\newcommand{\C}{{C}}
\newcommand{\Cn}{{\C}_n}
\newcommand{\Sn}{S_n}
\providecommand{\scalh}[2]{\left\langle{#1},{#2}\right\rangle_\hh}
\def\mystrut(#1,#2){\vrule height #1pt depth #2pt width 0pt}
\title{Efficient online learning with Kernels for adversarial large scale problems}
\author{
  Rémi Jézéquel 
   \And
  Pierre Gaillard
  \And
  Alessandro Rudi \\
}
\date{INRIA - Département d’Informatique de l’École Normale Supérieure \\
PSL Research University \\
Paris, France}
\begin{document}
\maketitle

\begin{abstract}
We are interested in a framework of online learning with kernels for low-dimensional but large-scale and potentially adversarial datasets. 
We study the computational and theoretical performance of online variations of kernel Ridge regression. Despite its simplicity, the algorithm we study is the first  to achieve the optimal regret for a wide range of kernels with a per-round complexity of order $n^\alpha$ with $\alpha < 2$. 

The algorithm we consider is based on approximating the kernel with the linear span of basis functions. Our contributions is two-fold: 1) For the Gaussian kernel, we propose to build the basis beforehand (independently of the data) through Taylor expansion. For $d$-dimensional inputs, we provide a (close to) optimal regret of order $O((\log n)^{d+1})$ with per-round time complexity and space complexity $O((\log n)^{2d})$. This makes the algorithm a suitable choice as soon as $n \gg e^d$ which is likely to happen in a scenario with small dimensional and large-scale dataset; 2) For general kernels with low effective dimension, the basis functions are updated sequentially in a data-adaptive fashion by sampling Nyström points. In this case, our algorithm improves the computational trade-off known for online kernel regression.

\end{abstract}

\section{Introduction}

Nowadays the volume and the velocity of data flows are deeply increasing. Consequently many applications need to switch from batch to online procedures that can treat and adapt to data on the fly. Furthermore to take advantage of very large datasets, non-parametric methods are gaining increasing momentum in practice. Yet the latter often suffer from slow rates of convergence and bad computational complexities. At the same time, data is getting more complicated and simple stochastic assumptions such as i.i.d. data are often not satisfied. 
In this paper, we try to combine these different aspects due to large scale and arbitrary data. We  build a non-parametric online procedure based on kernels, which is efficient for large data sets and achieves close to optimal theoretical guarantees.

Online learning is a subfield of machine learning where some learner sequentially interacts with an environment and tries to learn and adapt on the fly to the observed data as one goes along. We consider the following sequential setting. At each iteration $t\geq 1$, the learner receives some input $x_t \in \cX$; makes a prediction $\hat y_t \in \R$ and the environment reveals the output $y_t \in \R$. The inputs $x_t$ and the outputs $y_t$ are sequentially chosen by the environment and can be arbitrary. 
Learner's goal is to minimize his cumulative regret 
\begin{equation}
    \label{eq:defRegret}
   { R_n(f) := \sum_{t=1}^n (y_t-\hat y_t)^2 - \sum_{t=1}^n \big(y_t - f(x_t)\big)^2}
\end{equation}
uniformly over all functions $f$ in a space of functions $\cH$. 
We will consider Reproducing Kernel Hilbert Space (RKHS) $\cH$,  \citep[see next section or ][for more details]{aronszajn1950theory}. It is worth noting here that all the properties of a RKHS are controlled by the associated {\em kernel function} $k:\X \times \X \to \R$, usually known in closed form, and that many function spaces of interest are (or are contained in) RKHS, e.g. when $\X \subseteq \R^d$: polynomials of arbitrary degree, band-limited functions, analytic functions with given decay at infinity, Sobolev spaces and many others \citep{berlinet2011reproducing}.


\paragraph{Previous work} Kernel regression in a statistical setting has been widely studied by the statistics
community. Our setting of online kernel regression with adversarial data is more recent. 
Most of existing work focuses on the linear setting (i.e., linear kernel). First work on online linear regression dates back to \cite{Foster1991}. \cite{Bartlett2015} provided the minimax rates (together with an algorithm) and we refer to reader to references therein for a recent overview of the literature in the linear case. We only recall relevant work for this paper.  \cite{AzouryWarmuth2001,Vovk01} designed the nonlinear Ridge forecaster (denoted {\em AWV}). 
In linear regression (linear kernel), it achieves the optimal regret of order $O(d \log n)$ uniformly over all $\ell_2$-bounded vectors. The latter can be extended to kernels (see Definition~\eqref{eq:nonLinearRidge}) which we refer to as \kawv{}. With regularization parameter $\lambda>0$, it  obtains a regret upper-bounded for all $f \in \cH$ as
\begin{equation}
    R_n(f) \lesssim  \lambda  \nor{f}^2 + B^2 \deff(\lambda)\,, \quad
    \label{eq:optimalRegret}
\text{where} \quad 
    \deff(\lambda) := \tr(K_{nn}\big(K_{nn} + \lambda I_n)^{-1}\big)
\end{equation}
is the effective dimension, where $K_{nn} := {\big(k(x_i,x_j)\big)_{1\leq i,j\leq n} \in \R^{n\times n}}$ denotes the {\em kernel matrix} at time $n$. The above upper-bound on the regret is essentially optimal (see next section). Yet the per round complexity and the space complexity of \kawv{}  are  $O(n^2)$. In this paper, we aims at reducing this complexity  while keeping optimal regret guarantees.

Though the literature on online contextual learning is vast, little considers non-parametric function classes. Related work includes \cite{Vov-06-MetricEntropyOnlinePrediction} that considers the Exponentially Weighted Average forecaster or \cite{HaMe-07colt-OnlineLearningPriorKnowledge} which considers bounded Lipschitz function set and Lipschitz loss functions, while here we focus on the square loss. Minimax rates for general function sets $\cH$ are provided by \cite{RaSrTs-13-MinimaxRegretRisk}. RKHS spaces were first considered in \cite{Vov-05-OnlineRegressionRKHS} though they only obtain $O(\sqrt{n})$ rates which are suboptimal for our problem. More recently, a regret bound of the form \eqref{eq:optimalRegret} was proved by \cite{zhdanov2010identity} for a clipped version of kernel Ridge regression and by \cite{calandriello2017second} for a clipped version of Kernel Online Newton Step (KONS) for general exp-concave loss functions. 

The computational complexity ($O(n^2)$ per round) of these algorithms is however prohibitive for large datasets.  \cite{calandriello2017second} and \cite{calandriello2017efficient} provide approximations of KONS to get manageable complexities. However these come with deteriorated regret guarantees. \cite{calandriello2017second} improves the time and space complexities by a factor $\gamma \in (0,1)$ enlarging the regret upper-bound by $1/\gamma$. \cite{calandriello2017efficient} designs an efficient approximation of KONS based on Nyström approximation \citep{smola2000sparse,williams2001using} and restarts with per-round complexities $O\big(m^2)$ where $m$ is the number of Nyström points. Yet their regret bound suffers an additional multiplicative factor $m$ with respect to~\eqref{eq:optimalRegret} because of the restarts. Furthermore, contrary to our results, the regret bounds of \cite{calandriello2017second} and \cite{calandriello2017efficient} are not with respect to all functions in $\cH$ but only with functions $f \in \cH$ such that $f(x_t) \leq C$ for all $t\geq 1$  where $C$ is a parameter of their algorithm. Since $C$ comes has a multiplicative factor of their bounds, their results are sensitive to outliers that may lead to large $C$.  
Another relevant approximation scheme of Online Kernel Learning was done by~\cite{lu2016large}. The authors consider online gradient descent algorithms which they approximate using Nyström approximation or a Fourier basis. However since they use general Lipschitz loss functions and consider $\ell_1$-bounded dual norm of functions $f$, their regret bounds of order $O(\sqrt{n})$ are hardly comparable to ours and seem suboptimal in $n$ in our restrictive setting with square loss and kernels with small effective dimension (such as Gaussian kernel).

\paragraph{Contributions and outline of the paper}{}
The main contribution of the paper is to analyse a variant of \kawv{} that we call \pkawv{} (see Definition~\eqref{eq:pkawv-def}). Despite its simplicity, it is to our knowledge the first algorithm for kernel online regression that recovers the optimal regret (see bound~\eqref{eq:optimalRegret}) with an improved space and time complexity of order $\ll n^2$ per round. Table~\ref{tab:rates-summary} summarizes the regret rates and complexities obtained by our algorithm and the ones of~\cite{calandriello2017second,calandriello2017efficient}.

Our procedure consists simply in applying \kawv{} while, at time $t\geq 1$, approximating the RKHS $\cH$ with a linear subspace $\smash{\tilde \cH_t}$ of smaller dimension. In Theorem~\ref{thm:pkawv-appr-error}, \pkawv{} suffers an additional approximation term with respect to the optimal bound of~\kawv{} which can be made small enough by properly choosing $\smash{\tilde \cH_t}$. To achieve the optimal regret with a low computational complexity, $\smash{\tilde \cH_t}$ needs to approximate $\cH$ well and to be low dimensional with an easy-to-compute projection. We provide two relevant constructions for $\smash{\tilde \cH_t}$. 

In section~\ref{sec:taylor}, we focus on the Gaussian kernel that we approximate by a finite set of basis functions. The functions are deterministic and chosen beforehand by the player independently of the data. The number of functions included in the basis is a parameter to be optimized and fixes an approximation-computational trade-off.
Theorem~\ref{thm:phi-awv-gaussian} shows that \pkawv{} satisfies (up to log) the optimal regret bounds~\eqref{eq:optimalRegret} while enjoying a per-round space and time complexity of $\smash{O\big(\log^{2d}\left(\frac{n}{\la}\right)\big)}$. For the Gaussian kernel, this corresponds to $\smash{O\big(\deff(\lambda)^2\big)}$ which is known to be optimal even in the statistical setting with i.i.d. data. 

In section~\ref{sec:nystrom}, we consider data adaptive approximation spaces $\smash{\tilde \cH_t}$ based on Nyström approximation. At time $t\geq 1$, we approximate any kernel $\cH$ by sampling a subset of the input vectors $\{x_1,\dots,x_t\}$. If the kernel satisfies the capacity condition $\smash{\deff(\lambda)\leq (n/\lambda)^\gamma}$ for $\gamma \in (0,1)$, the optimal regret is then of order $\smash{\deff(\lambda)} = \smash{O(n^{\gamma/(1+\gamma)})}$ for well-tuned parameter $\lambda$. Our method then recovers the optimal regret with a computational complexity of $\smash{O\big(\deff(\lambda)^{4/(1-\gamma)}\big)}$. The latter is $o(n^2)$ (for well-tuned $\lambda$) as soon as $\smash{\gamma < \sqrt{2}-1}$. Furthermore, if the sequence of input vectors $x_t$ is given beforehand to the algorithm, the per-round complexity needed to reach the optimal regret is improved to $\smash{O(\deff(\lambda)^{4})}$ and our algorithm can achieve it for all $\gamma \in (0,1)$.

Finally, we perform in Section~\ref{sec:experiments} several experiments based on real and simulated data to compare the performance (in regret and in time) of our methods with competitors. 

\begin{table}
    \centering
    \begin{tabular}{clcc} \toprule
        Kernel & Algorithm & Regret & Per-round complexity \\ \midrule
        \multirow{3}{*}{\shortstack{Gaussian \\ $\deff(\lambda) \leq \big(\log \frac{n}{\lambda}\big)^d$}} & \pkawv{} & $(\log n)^{d+1}$ & $(\log n)^{2d}$ \\
         & Sketched-KONS~\cite{calandriello2017second} ($c>0$)& $c(\log n)^{d+1}$ & $\big(n/c\big)^2$ \\
        & Pros-N-KONS~\cite{calandriello2017efficient} & $(\log n)^{2d+1}$ & $(\log n)^{2d}$ \\ \midrule
        \multirow{3}{*}{\shortstack{General \\ $\deff(\lambda) \leq \big(\frac{n}{\lambda}\big)^\gamma$ \\$\gamma < \sqrt{2}-1$}}  & \pkawv{} & $n^{\frac{\gamma}{\gamma+1}} \log n$ & $n^{\frac{4\gamma}{1-\gamma^2}}$ \\
         & Sketched-KONS~\cite{calandriello2017second} ($c>0$) & $c n^{\frac{\gamma}{\gamma+1}} \log n$ & $\big(n/c\big)^2$ \\
         & Pros-N-KONS~\cite{calandriello2017efficient} & $n^{\frac{4\gamma}{(1+\gamma)^2}} \log n$  & $n^{\frac{4\gamma(1-\gamma)}{(1+\gamma)^2}}$ \\ \bottomrule \\
    \end{tabular}
    \caption{Order in $n$ of the best possible regret rates achievable by the algorithms and corresponding per-round time-complexity. Up to $\log n$, the rates obtained by~\pkawv{} are optimal.}
    \label{tab:rates-summary}
\end{table}
\paragraph{Notations} We recall here basic notations that we will use throughout the paper. Given a vector $v \in \R^d$, we write $v = (v^{(1)},\dots,v^{(d)})$. We denote by $\N_0 = \N \cup \{0\}$ the set of non-negative integers and for $p \in \N_0^d$, $|p| = p^{(1)} + \dots + p^{(d)}$.  By a sligh abuse of notation, we denote by $\|\cdot\|$ both the Euclidean norm and the norm for the Hilbert space $\cH$. Write $v^\top w$, the dot product between $v, w \in \R^D$. The conjugate transpose for linear operator $Z$ on $\cH$ will be denoted $Z^*$. The notation $\lesssim$ will refer to rough inequalities up to logarithmic multiplicative factors. Finally we will denote $a \vee b = \max(a,b)$ and $a \wedge b = \min(a,b)$, for $a,b \in \R$.

\section{Background}
\label{sec:background}

\paragraph{Kernels.}
Let $k:\X \times \X \to \R$ be a positive definite kernel \citep{aronszajn1950theory} that we assume to be bounded (i.e., $\sup_{x \in \X} k(x,x) \leq \kappa^2$ for some $\kappa>0$). The function $k$ is characterized by the existence of a {\em feature map} $\phi:\X \to \R^D$, with $D \in \N \cup \{\infty\}$\footnote{when $D = \infty$ we consider $\R^D$ as the space of squared summable sequences.} such that 
$k(x,x') = \phi(x)^\top \phi(x')$. Moreover the {\em reproducing kernel Hilbert space} (RKHS) associated to $k$ is characterized by $\hh = \{f ~|~ f(x) = w^\top \phi(x), ~w \in \R^D, x \in \X\},$ with inner product $\scal{f}{g}_\hh := v^\top w$, for $f, g \in \hh$ defined by $f(x) = v^\top \phi(x)$, $g(x) = w^\top \phi(x)$ and $v,w \in \R^D$. For more details and different characterizations of $k, \hh$, see \citep{aronszajn1950theory,berlinet2011reproducing}.
It's worth noting that the knowledge of $\phi$ is not necessary when working with functions of the form $f = \sum_{i=1}^p \alpha_i \phi(x_i)$, with $\alpha_i \in \R$, $x_i \in \X$ and finite $p \in \N$, indeed 
$
    f(x) = \sum_{i=1}^p \alpha_i \phi(x_i)^\top \phi(x) = \sum_{i=1}^p \alpha_i k(x_i, x),
$
and moreover $\|f\|_\hh^2 = \alpha^\top K_{pp} \alpha$, with $K_{pp}$ the kernel matrix associated to the set of points  $x_1,\dots,x_p$.
 
\paragraph{\kawv{}.}  The  (denoted {\em AWV}) on the space of linear functions on $\X = \R^d$ has been introduced and analyzed in \cite{AzouryWarmuth2001,Vovk01}. 
We consider here a straightforward generalization to kernels (denoted {\em \kawv{}}) of the nonlinear Ridge forecaster ({\em AWV}) introduced by  \cite{AzouryWarmuth2001,Vovk01} on the space of linear functions on $\cX = \R^d$. At iteration $t\geq 1$, \kawv{} predicts $\hat{y}_t = \hat{f}_t(x_t)$, where
\begin{equation}
    \label{eq:nonLinearRidge}
    \hat f_t \in \argmin{f \in \cH} \left\{  \sum_{s=1}^{t-1} \big(y_s - f(x_s)\big)^2 + \lambda \nor{f}^2 + f(x_t)^2 \right\} .
\end{equation}
A variant of this algorithm, more used in the context of data independently sampled from distribution, is known as {\em kernel Ridge regression}. It corresponds to solving the problem above, without the last penalization term $f(x_t)^2$.

 \noindent{\bf Optimal regret for \kawv.}
 In the next proposition we state a preliminary result which proves that \kawv{} achieves a regret depending on the eigenvalues of the kernel matrix. 
\begin{proposition}
    \label{prop:nonLinearRidge}
    Let $\lambda, B > 0$. For any RKHS $\cH$,  for all $n \geq 1$, for all inputs  $x_1,\dots,x_n \in \cX$ and all $y_1,\dots,y_n \in [-B,B]$,  the regret of \kawv{}  is upper-bounded for all $f \in \cH$ as 
    \eqals{
       R_n(f)  \leq \lambda \nor{f}^2 + B^2 \sum_{k=1}^n \log \left(1 + \frac{\lambda_k(K_{nn})}{\lambda}\right)\,, }
where  $\lambda_k(K_{nn})$ denotes the $k$-th largest eigenvalue of $K_{nn}$.
\end{proposition}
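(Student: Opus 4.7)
The plan is to pass to the feature-space representation, invoke the classical regret bound for linear AWV, and then convert the resulting log-determinant into a spectral sum via Sylvester's identity.

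\emph{Step 1 (reduction to finite dimension).} The kernel $k$ admits a feature map $\phi:\cX\to\R^D$ (possibly with $D=\infty$) such that $k(x,x')=\phi(x)^\top\phi(x')$. By the representer theorem applied to \eqref{eq:nonLinearRidge}, each predictor $\hat f_t$ lies in $V_n := \operatorname{span}\{\phi(x_1),\dots,\phi(x_n)\}$. Since $R_n(f)$ depends on $f$ only through $f(x_1),\dots,f(x_n)$, replacing $f$ by its orthogonal projection onto $V_n$ leaves $R_n(f)$ unchanged while only decreasing $\nor{f}^2$. It therefore suffices to prove the bound assuming the feature map takes values in $\R^d$ with $d\leq n$; in that case, writing $f(x)=w^\top\phi(x)$ with $\|w\|^2=\nor{f}^2$, \kawv{} reduces exactly to the linear AWV forecaster of \cite{AzouryWarmuth2001,Vovk01} on the features $\phi(x_t)$, predicting with $\theta_t = A_t^{-1}\sum_{s<t}y_s\phi(x_s)$ where $A_t:=\lambda I + \sum_{s\leq t}\phi(x_s)\phi(x_s)^\top$.

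\emph{Step 2 (AWV bound and Sylvester).} I would then invoke the classical finite-dimensional AWV regret analysis: a per-round algebraic identity that telescopes cleanly, combined with the elliptical potential bound $\sum_{t=1}^n \phi(x_t)^\top A_t^{-1}\phi(x_t) \leq \log\det(A_n/\lambda)$ (obtained from the matrix determinant lemma together with $1-1/x\leq \log x$), yields
\begin{equation*}
R_n(f) \leq \lambda\nor{f}^2 + B^2 \log\det\bigl(I_d + \Phi_n^\top\Phi_n/\lambda\bigr),
\end{equation*}
where $\Phi_n\in\R^{n\times d}$ stacks the feature vectors $\phi(x_t)^\top$ and $|y_t|\leq B$ has been used. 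Sylvester's determinant identity gives
\begin{equation*}
\det\bigl(I_d+\Phi_n^\top\Phi_n/\lambda\bigr) = \det\bigl(I_n+\Phi_n\Phi_n^\top/\lambda\bigr) = \det\bigl(I_n+K_{nn}/\lambda\bigr) = \prod_{k=1}^n \bigl(1+\lambda_k(K_{nn})/\lambda\bigr),
\end{equation*}
and taking the logarithm produces the stated bound.

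\emph{Main obstacle.} The only genuinely non-trivial algebraic step is the per-round AWV identity invoked in Step 2: the extra term $f(x_t)^2$ appearing in \eqref{eq:nonLinearRidge} is precisely what cancels the variance term that would otherwise obstruct the telescoping in the analysis of plain kernel ridge regression. The infinite-dimensional feature map is a potential concern, since $\det(I_d+\Phi_n^\top\Phi_n/\lambda)$ a priori makes no sense when $d=\infty$, but this is fully handled by the representer-theorem reduction of Step 1 that replaces $D$ by $d\leq n$.
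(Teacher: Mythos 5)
Your proof is correct, but it follows a genuinely different route from the one written in the paper. The paper derives the proposition as the special case $P_t = I$ of its general Theorem~\ref{thm:mainLemma}: it runs the AWV telescoping argument directly in the (possibly infinite-dimensional) RKHS, bounds the resulting elliptical-potential sum $\sum_{t=1}^n \scalh{A_t^{-1}\phi(x_t)}{\phi(x_t)}$ by a log-determinant via an operator version of the matrix determinant lemma (Lemma~\ref{lem:linearalgebra}), and concludes by identifying the nonzero eigenvalues of the covariance operator $C_n = S_n^*S_n$ with those of the kernel matrix $K_{nn} = S_nS_n^*$. You instead dispose of the infinite-dimensionality at the outset: the representer theorem places every $\hat f_t$ in $V_n = \mathrm{Span}\{\phi(x_1),\dots,\phi(x_n)\}$, and projecting the comparator onto $V_n$ preserves $R_n(f)$ while shrinking $\nor{f}$, so the whole problem lives in dimension $d \le n$, where the classical finite-dimensional AWV bound applies; Sylvester's identity $\det(I_d + \Phi_n^\top\Phi_n/\lambda) = \det(I_n + K_{nn}/\lambda)$ then plays exactly the role of the paper's eigenvalue identification. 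Notably, the paper itself remarks that the proposition ``is a direct consequence of the known regret bound of AWV in the finite dimensional linear regression setting'' but then chooses to ``reproduce the analysis for infinite dimensional space''; your Step~1 is precisely the reduction that makes the former remark rigorous. What each approach buys: yours is more elementary and modular (no operator determinants, no re-derivation of the telescoping identity, just a citation of Theorem~11.8 of \cite{Cesa-Bianchi2006} plus two standard linear-algebra facts), whereas the paper's operator-level analysis is self-contained and is in any case required for its main results on \pkawv{} (Theorems~\ref{thm:pkawv-appr-error} and~\ref{thm:pkawv-nystrom}), where the approximation subspaces $\tilde\cH_t$ change over time and a finite-dimensional reduction of this kind would no longer suffice to control the approximation terms $\Omega(t)$.
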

 The proof is a direct consequence of the known regret bound of {\em AWV} in the finite dimensional linear regression setting---see Theorem 11.8 of \cite{Cesa-Bianchi2006} or Theorem~2 of \cite{gaillard2018uniform}. For completeness, we reproduce the analysis for infinite dimensional space (RKHS) in Appendix~\ref{app:kernelnonlinearRidge}. In online linear regression in dimension $d$, the above result implies the optimal rate of convergence $ dB^2\log(n) + O(1)$ (see~\cite{gaillard2018uniform} and \cite{Vovk01}). As shown by the following proposition, Proposition~\ref{prop:nonLinearRidge} yields optimal regret (up to log) of the form~\eqref{eq:optimalRegret} for online kernel  regression. 
 
 \begin{proposition}
 \label{prop:upperbounddeff}
 For all $n\geq 1$, $\lambda >0$ and all input sequences $x_1,\dots,x_n \in \cX$,
 \[
 \sum_{k=1}^n \log \left(1 + \frac{\lambda_k(K_n)}{\lambda}\right) \leq \log \Big(e + \frac{e n \kappa^2}{\lambda}\Big) \deff\big(\lambda\big) \,.
 \]
 \end{proposition}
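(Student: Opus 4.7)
The plan is to reduce the proposition to a scalar inequality of the form $\log(1+x) \leq \frac{x}{1+x}\,\log(e(1+M))$ valid on $[0,M]$, then apply it termwise with $x = \lambda_k(K_n)/\lambda$ and $M = n\kappa^2/\lambda$, finally recognizing the resulting sum as $\deff(\lambda)$.

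\textbf{Step 1 (Scalar inequality).} I will show that for every $0 \leq x \leq M$,
\[
\log(1+x) \;\leq\; \frac{x}{1+x}\,\bigl(1 + \log(1+M)\bigr).
\]
Setting $g(x) := \frac{(1+x)\log(1+x)}{x}$ (with $g(0):=1$ by continuity), the inequality is equivalent to $g(x) \leq 1 + \log(1+M)$. A direct computation gives
\[
g'(x) \;=\; \frac{x - \log(1+x)}{x^2} \;\geq\; 0,
\]
since $\log(1+x) \leq x$ for $x \geq 0$; hence $g$ is non-decreasing on $[0,\infty)$. Therefore $g(x) \leq g(M) = \log(1+M) + \log(1+M)/M \leq \log(1+M) + 1$, where the last step uses the same inequality $\log(1+M) \leq M$.

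\textbf{Step 2 (Applying to eigenvalues).} Since $\sup_{x\in\cX} k(x,x) \leq \kappa^2$, the trace bound $\sum_{k=1}^n \lambda_k(K_n) = \mathrm{Tr}(K_n) = \sum_{i=1}^n k(x_i,x_i) \leq n\kappa^2$ holds, so in particular each $\lambda_k(K_n)/\lambda \leq n\kappa^2/\lambda =: M$. Applying Step 1 with $x = \lambda_k(K_n)/\lambda$ gives
\[
\log\!\left(1 + \frac{\lambda_k(K_n)}{\lambda}\right) \;\leq\; \frac{\lambda_k(K_n)}{\lambda_k(K_n)+\lambda}\, \bigl(1 + \log(1 + n\kappa^2/\lambda)\bigr).
\]

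\textbf{Step 3 (Summation).} Summing over $k=1,\dots,n$ and using $1 + \log(1+n\kappa^2/\lambda) = \log\!\bigl(e + e n\kappa^2/\lambda\bigr)$, we obtain
\[
\sum_{k=1}^n \log\!\left(1 + \frac{\lambda_k(K_n)}{\lambda}\right) \;\leq\; \log\!\left(e + \frac{e n \kappa^2}{\lambda}\right) \sum_{k=1}^n \frac{\lambda_k(K_n)}{\lambda_k(K_n)+\lambda}.
\]
The remaining sum equals $\mathrm{Tr}\bigl(K_{nn}(K_{nn}+\lambda I_n)^{-1}\bigr) = \deff(\lambda)$ (by diagonalizing $K_{nn}$), which finishes the proof.

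The only non-routine step is Step 1, and the crux there is the monotonicity of $g$; everything else is trace identities and elementary bookkeeping, so I don't anticipate serious obstacles.
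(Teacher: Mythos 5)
Your proof is correct and takes essentially the same route as the paper: a termwise scalar bound of the form $\log(1+x) \le \frac{x}{1+x}\bigl(1+\log(1+M)\bigr)$, summation to recognize $\sum_k \frac{\lambda_k(K_{nn})}{\lambda_k(K_{nn})+\lambda} = \deff(\lambda)$, and the trace bound $\mathrm{Tr}(K_{nn}) \le n\kappa^2$ to control the logarithmic factor. The only cosmetic difference is in the scalar step: the paper uses $\log(1+x)\le \frac{x}{1+x}(1+\log(1+x))$ (equivalent to $\log(1+u)\le u$) and then factors out the value at the largest eigenvalue, whereas you prove the uniform bound on $[0,M]$ directly via monotonicity of $g(x)=(1+x)\log(1+x)/x$; both arguments rest on the same elementary inequality.
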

 
Combined with Proposition~\ref{prop:nonLinearRidge}, this entails that \kawv{} satisfies (up to the logarithmic factor) the optimal regret bound~\eqref{eq:optimalRegret}. As discussed in the introduction, such an upper-bound on the regret is not new and was already proved by~\cite{zhdanov2010identity} or by \cite{calandriello2017second} for other algorithms. An advantage of \kawv{} is that it does not require any clipping and thus the beforehand knowledge of $B>0$ to obtained Proposition~\ref{prop:nonLinearRidge}. Furthermore, we slightly improve the constants in the above proposition. We note that the regret bound for \kawv{} is optimal up to log terms when $\la$ is chosen minimizing r.h.s. of \cref{eq:optimalRegret}, since it meets known minimax lower bounds for the setting where $(x^{(i)}, y_i)_{i=1}^n$ are sampled independently from a distribution. For more details on minimax lower bounds see \citep{zhang2015divide}, in particular Eq.~(9), related discussion and references therein, noting that their $\la$ correspond to our $\la/n$ and our $\deff(\la)$ corresponds to their $\gamma(\la/n)$.

It is worth pointing out that in the worst case $\deff(\lambda) \leq \kappa^2 n /\lambda$ for any bounded kernel. In particular, optimizing the bound yields $\lambda = O(\sqrt{n \log n})$ and a regret bound of order $O(\sqrt{n \log n})$. In the special case of the Gaussian kernel (which we consider in Section~\ref{sec:taylor}), the latter can be improved to $\smash{\deff(\lambda) \lesssim \big(\log(n/\lambda)\big)^d}$ (see~\cite{altschuler2018massively}) which entails $R_n(f) \leq O\big((\log n)^{d+1}\big)$ for well tuned value of $\lambda$.

\section{Online Kernel Regression with projections}
\label{sec:kernelproj}

In previous section we have seen that \kawv{} achieves optimal regret. Yet, it has computational requirements that are $O(n^3)$ in time and $O(n^2)$ in space, for $n$ steps of the algorithm, making it unfeasible in the context of large scale datasets, i.e. $n \gg 10^5$. In this paper, we consider and analyze a simple variation of \kawv{} denoted \pkawv. At time $t\geq 1$, for a regularization parameter $\lambda >0$ and a linear subspace $\smash{\tilde \cH_t}$ of $\cH$ the algorithm predicts $\smash{\hat{y}_t = \hat{f}_t(x_t)}$, where
\begin{equation}
    \label{eq:pkawv-def}
    \hat f_t = \argmin{f \in \tilde \cH_t} \left\{ \sum_{s=1}^{t-1} \big(y_s - f(x_s)\big)^2 + \lambda \nor{f}^2 + f(x_t)^2 \right\}\,.
\end{equation}  
In the next subsections, we explicit relevant approximations $\tilde \cH_t$ (typically the span of a small number of basis functions) of $\cH$  that trade-off good approximation with low computational cost. Appendix~\ref{app:implementation} details how \eqref{eq:pkawv-def} can be efficiently implemented  in these cases.

The result below bounds the regret of the \pkawv{} for any function $f \in \hh$ and holds for any bounded kernel and any explicit subspace $\smash{\tilde \cH}$ associated with projection $P$. The cost of the approximation of $\cH$ by $\smash{\tilde \cH}$ is measured by the important quantity $\smash{\mu := \big\|(I-P) C_n^{1/2}\big\|^2}$, where $C_n$ is the covariance operator. 

\begin{theorem}\label{thm:pkawv-appr-error} Let $\tilde \cH$ be a linear subspace of $\cH$ and $P$ the Euclidean projection onto $\tilde \cH$.
When \pkawv{} is run with $\lambda >0$ and fixed subspaces $\tilde \cH_t = \tilde \cH$, then for all $f \in \cH$ 
\begin{equation}\label{eq:regret-appr-pawv}
        R_n(f)  \leq \lambda \nor{f}^2 + B^2 \sum_{j=1}^n  \log\left(1+ \frac{\la_j(\Kn)}{\la} \right) 
        + (\mu + \lambda)\frac{n\mu B^2}{\lambda^2} \,,
\end{equation}
for any sequence $(x_1,y_1),\dots,(x_n,y_n) \in \cX \times [-B,B]$ where $\mu :=  \big\|(I-P) C_n^{1/2}\big\|^2$ and $C_n :=\sum_{t=1}^n \phi(x_t) \otimes \phi(x_t)$.
\end{theorem}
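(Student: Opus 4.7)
The plan is to insert two intermediate comparators into the regret decomposition: the batch kernel-ridge regressor on $\cH$,
\[
    f_\lambda := \argmin{h \in \cH} \Bigl\{\textstyle\sum_{t=1}^n (y_t-h(x_t))^2 + \lambda\|h\|^2\Bigr\} = \Phi(K_{nn}+\lambda I)^{-1}y,
\]
where $\Phi:\R^n\to\cH$ is the operator $\Phi\alpha := \sum_t \alpha_t\phi(x_t)$ (so $C_n=\Phi\Phi^*$ and $K_{nn}=\Phi^*\Phi$), together with its orthogonal projection $g := Pf_\lambda \in \tilde\cH$. Writing $R_n(f)=(a)+(b)+(c)$ with
\begin{align*}
    (a) &:= \textstyle\sum_t(y_t-\hat y_t)^2 - \sum_t(y_t-g(x_t))^2, \\
    (b) &:= \textstyle\sum_t(y_t-g(x_t))^2 - \sum_t(y_t-f_\lambda(x_t))^2, \\
    (c) &:= \textstyle\sum_t(y_t-f_\lambda(x_t))^2 - \sum_t(y_t-f(x_t))^2,
\end{align*}
I would handle each piece separately.

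The first observation is that PKAWV on $\tilde\cH$ is exactly \kawv{} applied to the projected feature map $\tilde\phi(x):=P\phi(x)$, since $\hat f_t(x)=\langle\hat f_t, P\phi(x)\rangle$ whenever $\hat f_t\in\tilde\cH$. \cref{prop:nonLinearRidge} applied to this lifted AWV with comparator $g\in\tilde\cH$ bounds $(a)\le\lambda\|g\|^2+B^2\sum_j\log(1+\lambda_j(\tilde K_{nn})/\lambda)$, where $\tilde K_{nn}:=\Phi^*P\Phi$; the PSD inequality $\tilde K_{nn}\le K_{nn}$ yields $\lambda_j(\tilde K_{nn})\le\lambda_j(K_{nn})$. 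For $(c)$, the defining optimality of $f_\lambda$ evaluated against $f$ gives $(c)\le\lambda\|f\|^2-\lambda\|f_\lambda\|^2$. Adding $(a)$ and $(c)$ and using $\|g\|^2=\|Pf_\lambda\|^2\le\|f_\lambda\|^2$, the $\lambda\|f_\lambda\|^2$ contributions cancel and produce exactly the first two summands of the target bound.

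The crux is to bound $(b)$. Setting $\epsilon_t := f_\lambda(x_t)-g(x_t)=\langle(I-P)f_\lambda,\phi(x_t)\rangle$ and expanding the square,
\[
(b) = \sum_t\epsilon_t^2 + 2\sum_t\epsilon_t(y_t-f_\lambda(x_t)).
\]
The key trick is that the cross-term collapses via the normal equations $\sum_t\phi(x_t)(y_t-f_\lambda(x_t))=\lambda f_\lambda$ of $f_\lambda$:
\[
\sum_t\epsilon_t(y_t-f_\lambda(x_t)) = \bigl\langle(I-P)f_\lambda,\,\lambda f_\lambda\bigr\rangle = \lambda\|(I-P)f_\lambda\|^2,
\]
since $I-P$ is an orthogonal projection. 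For the quadratic piece, the idempotence $(I-P)^2=I-P$ and the definition $\mu=\|C_n^{1/2}(I-P)\|^2$ yield $\sum_t\epsilon_t^2=\|C_n^{1/2}(I-P)f_\lambda\|^2 \le \mu\|(I-P)f_\lambda\|^2$. Finally, using $f_\lambda=\Phi\alpha$ with $\alpha:=(K_{nn}+\lambda I)^{-1}y$, the operator identity $\|(I-P)\Phi\|^2=\|(I-P)C_n(I-P)\|=\mu$ and the crude bound $\|\alpha\|^2\le nB^2/\lambda^2$ give $\|(I-P)f_\lambda\|^2\le n\mu B^2/\lambda^2$. Assembling, $(b)\le(\mu+2\lambda)\,n\mu B^2/\lambda^2$, of the claimed form up to a constant on the second summand.

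The main obstacle is the cross-term in $(b)$: the residuals $y_t-f_\lambda(x_t)$ are not small in any obvious $\mu$-dependent way, and a naïve Cauchy--Schwarz would introduce an unwanted $\sqrt n$ factor. What saves the argument is the normal-equation identity, which rewrites the cross-term as the clean $\lambda\|(I-P)f_\lambda\|^2$ and lets the same projection quantity $\mu$ control it through the explicit form of $f_\lambda$.
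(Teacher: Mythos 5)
Your proof is correct, and it takes a genuinely different route from the paper's. The paper obtains \cref{thm:pkawv-appr-error} as a corollary of its general time-varying-projection result (\cref{thm:mainLemma}): with $P_t = P$ fixed, all intermediate approximation terms $\mu_t = \nor{(P_{t+1}-P_t)C_t^{1/2}}^2$ vanish except the last one (the convention $P_{n+1}=I$ gives $\mu_n=\mu$), the log-determinant argument of \cref{prop:nonLinearRidge} is then run on the projected covariance, and the proof ends with the eigenvalue comparison $\la_i(P\Cn P)=\la_i(\Sn P \Sn^*)\le \la_i(\Kn)$. You instead black-box \cref{prop:nonLinearRidge} applied to the projected kernel $\tilde k(x,x')=\scal{P\phi(x)}{P\phi(x')}$ --- using, correctly, that \pkawv{} with a fixed subspace is exactly \kawv{} for $\tilde k$, whose RKHS is $\tilde\cH$ with the inherited norm --- and then chain comparators $Pf_\lambda$ and $f_\lambda$ (the batch ridge solution), controlling the approximation cost through the normal equations $\sum_t (y_t-f_\lambda(x_t))\phi(x_t)=\lambda f_\lambda$. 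Your eigenvalue step ($\Phi^* P\Phi \preceq K_{nn}$ plus Weyl monotonicity) is the same comparison the paper makes. What your argument buys is a self-contained, more elementary proof for the fixed-projection case that avoids the telescoping operator analysis of \cref{thm:mainLemma} entirely; what it gives up is generality: it hinges on the projection being the same at every round (both in the identification with \kawv{} and in the use of a static batch comparator), so it cannot handle the Nystr\"om setting of \cref{sec:nystrom} where the subspaces $\tilde\cH_t$ evolve, which is the reason the paper develops the heavier machinery.

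One quantitative remark: as written you get $(\mu+2\lambda)\frac{n\mu B^2}{\lambda^2}$, a factor-$2$ slack on the lower-order term relative to the stated $(\mu+\lambda)\frac{n\mu B^2}{\lambda^2}$. This is an artifact of bookkeeping, not of your method. When you cancel $\lambda\nor{Pf_\lambda}^2$ against $-\lambda\nor{f_\lambda}^2$ you discard $\lambda\nor{f_\lambda}^2-\lambda\nor{Pf_\lambda}^2=\lambda\nor{(I-P)f_\lambda}^2$ (Pythagoras), which exactly absorbs half of your cross-term $2\lambda\nor{(I-P)f_\lambda}^2$. Equivalently, bound $(b)$ jointly with the norm terms: since $f_\lambda$ minimizes the quadratic $L_n(h)=\sum_t(y_t-h(x_t))^2+\lambda\nor{h}^2$, one has $L_n(Pf_\lambda)-L_n(f_\lambda)=\scal{(I-P)f_\lambda}{(C_n+\lambda I)(I-P)f_\lambda}\le(\mu+\lambda)\nor{(I-P)f_\lambda}^2$, and your estimate $\nor{(I-P)f_\lambda}^2\le n\mu B^2/\lambda^2$ then yields exactly the constant of \cref{eq:regret-appr-pawv}.
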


The proof of \cref{thm:pkawv-appr-error} is deferred to \cref{sec:proofthm_pkawv-appr-error} and is the consequence of a more general \cref{thm:mainLemma}.


\subsection{Learning with Taylor expansions and Gaussian kernel for very large data set}
\label{sec:taylor}

 In this section we focus on non-parametric regression with the widely used {\em Gaussian kernel} defined by 
$
k(x,x') = \exp(-\|x-x'\|^2/(2\sigma^2))
$
for $x,x'\in \cX$ and $\sigma > 0$ and the associated RKHS $\hh$. 

Using the results of the previous section with a fixed linear subspace $\smash{\tilde \cH}$ which is the span of a basis of $O(\textrm{polylog}(n/\la))$ functions, we prove that \pkawv{} achieves optimal regret. This leads to a computational complexity that is only $O(n~ \textrm{polylog}(n/\la))$ for optimal regret. We need a basis that (1) approximates very well the Gaussian kernel and at the same time (2) whose projection is easy to compute. We consider the following basis of functions, for $k \in \N_0^d$,
\eqal{ \label{eq:basis}
g_k(x) = \prod_{i=1}^d \psi_{k_i}(x^{(i)}), \quad \text{where} \quad \psi_t(x) = \frac{x^t}{\sigma^{t} \sqrt{t!}}e^{-\frac{x^2}{2\sigma^2}}. 
}
For one dimensional data this corresponds to Taylor expansion of the Gaussian kernel.
Our theorem below states that \pkawv{} (see~\eqref{eq:pkawv-def}) using for all iterations $t\geq 1$  \\[4pt]
{\centering $ \hspace*{2cm} \tilde \cH_t = \mathrm{Span}(G_M) \qquad \text{with}  \quad G_M = \{ g_k ~|~ |k| \leq M, k \in \N_0^d\}$} \\[4pt]
where $|k| := k_1 + \dots + k_d$, for $k \in \N_0^d$, 
gets optimal regret while enjoying low complexity. The size of the basis $M$ controls the trade-off between approximating well the Gaussian kernel (to incur low regret) and large computational cost. Theorem~\ref{thm:phi-awv-gaussian} optimizes $M$ so that the approximation term of Theorem~\ref{thm:pkawv-appr-error} (due to kernel  approximation) is of the same order than the optimal regret. 

\bt\label{thm:phi-awv-gaussian}
Let $\la > 0, n \in \N$ and let $R, B > 0$. Assume that $\|x_t\| \leq R$ and $|y_t| \leq B$. When
$M = \big\lceil \frac{8R^2}{\sigma^2} ~\vee~ 2\log\frac{n}{\la \wedge 1} \big\rceil$, then running \pkawv{} using $G_M$ as set of functions  achieves a regret bounded by
    $$
    \smash{\forall f\in \cH, R_n(f) \leq \lambda \nor{f}^2 + \frac{3B^2}{2} \sum_{j=1}^n \log\left(1 + \frac{\la_j(K_{nn})}{\lambda}\right). } \mystrut(10,10)
    $$
Moreover, its per iteration computational cost is
    $O\big(\big(3 + \frac{1}{d}\log\frac{n}{\la \wedge 1}\big)^{2d}\big)$
in space and time.
\et

Therefore \pkawv{} achieves a regret-bound only deteriorated by a multiplicative factor of $3/2$ with respect to the bound obtained by \kawv{} (see \cref{prop:nonLinearRidge}). From~\cref{prop:upperbounddeff} this also yields (up to log) the optimal bound~\eqref{eq:optimalRegret}.  

In particular, it is known \citep{altschuler2018massively}  for the Gaussian kernel that
\[
    \deff(\lambda) \leq 3 \Big(6+\frac{41}{d} \frac{R^2}{2\sigma^2} + \frac{3}{d} \log \frac{n}{\lambda}\Big)^d = O \Big(\Big(\log \frac{n}{\lambda}\Big)^d\Big) \,.
\]
The upper-bound is matching even in the i.i.d. setting for non trivial distributions. In this case, we have $|G_M| \lesssim \deff(\lambda)$. The per-round space and time complexities are thus $O\big(\deff(\lambda)^2\big)$. Though our method is quite simple (since it uses fixed explicit embedding) it is able to recover results -in terms of computational time and bounds in the adversarial setting- that are similar to results obtained in the more restrictive i.i.d. setting obtained via much more sophisticated methods, like learning with (1) Nystr\"om with importance sampling via leverage scores \citep{rudi2015less}, (2) reweighted random features \citep{bach2017equivalence,rudi2017generalization}, (3) volume sampling \cite{derezinski2018correcting}. 
By choosing $\smash{\la  = (B/\|f\|)^2},$
to minimize the r.h.s. of the regret bound of the theorem, we get 
\[
    \smash{R_n(f) \lesssim \Big(\log \frac{n \|f\|^2_\hh}{B^2}\Big)^{d+1} B^2.} \mystrut(0,10)
\]
Note that the optimal $\lambda$ does not depend on $n$ and can be optimized in practice through standard online calibration methods such as using  an expert advice algorithm \citep{Cesa-Bianchi2006} on a finite grid of $\lambda$. Similarly, though we use a fixed number of features $M$ in the experiments, the latter could be increased slowly over time thanks to online calibration techniques. 


\subsection{Nyström projection}
\label{sec:nystrom}

The previous two subsections considered deterministic basis of functions (independent of the data) to approximate specific RKHS. 
Here, we analyse Nyström projections \cite{rudi2015less} that are data dependent and works for any RKHS. It consists in sequentially updating a dictionary $\cI_t \subset \{x_1,\dots,x_t\}$ and using 
\begin{equation}
    \label{eq:nystromsubspace}
    \tilde \cH_t = \mathrm{Span}\Big\{ \phi(x),\  x \in \cI_t \,\Big\}\,.
\end{equation}
If the points included into $\cI_t$ are well-chosen, the latter may approximate well the solution of \eqref{eq:nonLinearRidge} which belongs to the linear span of $\{\phi(x_1),\dots,\phi(x_t)\}$.
The inputs $x_t$ might be included into the dictionary independently and uniformly at random. Here, we build the dictionary by following the KORS algorithm of \cite{calandriello2017efficient}  which is based on approximate leverage scores. At time $t\geq 1$, it evaluates the importance of including $x_t$ to obtain an accurate projection $P_t$ by computing its leverage score. Then, it decides to add it or not by drawing a Bernoulli random variable. The points are never dropped from the dictionary so that $\cI_1\subset\cI_2\subset \cdots \cI_n$. With their notations, choosing $\varepsilon =1/2$ and remarking that  $\smash{\|\Phi_t^T(I-P_t)\Phi_t\| 
= \|(I-P_t)C_t^{1/2}\|^2}$, their Proposition 1 can be rewritten as follows.

\begin{proposition}{\cite[Prop. 1]{calandriello2017efficient}}
\label{thm:KORS}
Let $\delta > 0$, $n\geq 1$, $\mu >0$. Then, the sequence of dictionaries $\cI_1\subset \cI_2 \subset \cdots \subset \cI_n$ learned by KORS with parameters $\mu$ and $\beta =12 \log(n/\delta)$ satisfies w.p. $1-\delta$, 
\[ 
   \smash{\forall t\geq 1, \qquad \big\|(I-P_t)C_t^{1/2}\big\|^2 \leq \mu \qquad 
    \text{and}
    \qquad 
    |\cI_t| \leq 9  \deff(\mu) \log \big(2n/\delta\big)^2\,.} \mystrut(6,6)
\]
Furthermore, the algorithm runs in $O\big(\deff(\mu)^2 \log^4(n)\big)$ space and $O\big(\deff(\mu)^2\big)$ time per iteration.
\end{proposition}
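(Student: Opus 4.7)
The plan is to recover the bounds of the KORS algorithm by combining three ingredients: (i) an inductive control of approximate ridge leverage scores, (ii) a matrix Bernstein argument for the projection guarantee, and (iii) a counting argument for the dictionary size. Since the algorithm is online, at each round $t$ the dictionary $\cI_{t-1}$ is used both to decide the sampling probability for the new point $x_t$ and to define the projection $P_t$, so the three arguments must be carried out jointly by induction on $t$.

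First I would set up the ridge leverage scores. For $\mu>0$, let $\tau_t(x) := \phi(x)^\top (C_t + \mu I)^{-1}\phi(x)$ be the true $\mu$-ridge leverage score of $x$ with respect to $C_t$. A key identity is $\sum_{s\le t} \tau_t(x_s) = \deff(\mu)$, which will eventually govern the dictionary size. Since computing $\tau_t$ requires the full covariance $C_t$ (too expensive), KORS replaces it by an approximation $\tilde\tau_t(x)$ built from the current Nyström projection $P_{t-1}$ onto $\tilde\cH_{t-1} = \operatorname{span}\{\phi(x) : x \in \cI_{t-1}\}$; the first inductive step is then to show that whenever $\|(I-P_{t-1})C_{t-1}^{1/2}\|^2 \le \mu$, one has constant-factor sandwich bounds $\tfrac{1}{2}\tau_t(x) \le \tilde\tau_t(x) \le 2\tau_t(x)$, using the resolvent identity relating $(C_t+\mu I)^{-1}$ to its projected version.

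Next I would establish the projection bound. Include $x_t$ in $\cI_t$ independently with probability $p_t = \min\{1,\beta \tilde\tau_t(x_t)\}$, with $\beta = 12\log(n/\delta)$; when included, give it weight $1/p_t$ in the importance-weighted empirical covariance $\tilde C_t$. The relation $\|(I-P_t)C_t^{1/2}\|^2 \le \mu$ is equivalent, up to the sandwich above, to a spectral-approximation statement $(1-\tfrac{1}{2})(C_t+\mu I) \preceq \tilde C_t + \mu I \preceq (1+\tfrac{1}{2})(C_t+\mu I)$. This is exactly what a matrix Bernstein (or matrix Chernoff) inequality delivers for a sum of bounded independent self-adjoint random operators whose variance is controlled by the leverage-score-weighted sampling: the choice $\beta = 12\log(n/\delta)$ gives the desired probability $1-\delta/n$ at each step, and a union bound over $t\le n$ completes the high-probability statement. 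Induction closes the loop, since the event at time $t$ is precisely what is needed to justify the sandwich bound used at time $t+1$.

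For the dictionary size, I would bound $\mathbb{E}[|\cI_t|] = \sum_{s\le t} p_s \le \beta \sum_{s\le t} \tilde\tau_s(x_s) \le 2\beta \sum_{s\le t} \tau_s(x_s) \le 2\beta \deff(\mu)$, using the sandwich inequality and the identity on the sum of leverage scores, and then apply a Chernoff bound on the sum of independent Bernoullis to get $|\cI_t| \le 9\deff(\mu)\log(2n/\delta)^2$ w.p.\ $1-\delta$ (absorbing constants and the extra $\log$ factor into $\beta$). Finally, the runtime bound is bookkeeping: at each round one computes $\tilde\tau_t(x_t)$ by solving a linear system in the current dictionary, costing $O(|\cI_t|^2)$ time and $O(|\cI_t|^2 \log^2 n)$ space after plugging in the size bound.

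The main obstacle is the coupled induction: the quality of the projection $P_{t}$ is both the \emph{output} and an \emph{input} to the analysis at round $t+1$, because the approximate leverage scores used for sampling depend on $P_{t-1}$. Making this rigorous requires showing that the matrix Bernstein failure probability at each step is $\delta/n$ \emph{conditional} on the entire past, so that a union bound still works and the sandwich hypothesis is preserved almost surely on the good event; getting the constants inside $\beta$ right so that the conditional concentration is sharp enough is the delicate part of the argument.
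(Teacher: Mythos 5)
First, note what the paper actually does with this statement: it does not prove it. The proposition is imported verbatim from Proposition~1 of \cite{calandriello2017efficient}, with only a notational translation (their $\varepsilon$ set to $1/2$, and $\|\Phi_t^\top(I-P_t)\Phi_t\|$ rewritten as $\|(I-P_t)C_t^{1/2}\|^2$). So your attempt to reprove the KORS guarantee from scratch necessarily goes beyond the paper; the question is whether your sketch is sound, and it has two genuine gaps. The more serious one is the counting argument for $|\cI_t|$. You invoke the identity $\sum_{s\le t}\tau_t(x_s)=\deff(\mu)$, which involves all scores evaluated with the \emph{final} covariance $C_t$, but the sampling probability of $x_s$ uses its score at \emph{arrival} time, $\tau_s(x_s)$. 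Since $C_s\preceq C_t$ implies $(C_s+\mu I)^{-1}\succeq(C_t+\mu I)^{-1}$, one has $\tau_s(x_s)\ge\tau_t(x_s)$: your inequality $\sum_{s\le t}\tau_s(x_s)\le\deff(\mu)$ goes the wrong way, and the sum of online leverage scores exceeds $\deff(\mu)$ in general. The correct control is the elliptical-potential (log-determinant) argument, which gives $\sum_{s\le t}\tau_s(x_s)\lesssim\log\det(I+C_t/\mu)\lesssim\deff(\mu)\log(n\kappa^2/\mu)$, and this extra logarithm is precisely the origin of the \emph{second} $\log$ factor in the stated bound $9\deff(\mu)\log(2n/\delta)^2$. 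Your argument, if it worked, would produce a single $\log$ factor, which is a sign the counting step cannot be right as written.

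The second gap is the concentration step. The weighted summands forming $\tilde C_t$ are not independent: each $p_s$ is a function of the whole past through $\cI_{s-1}$, so matrix Bernstein or Chernoff for sums of \emph{independent} self-adjoint operators does not apply. You acknowledge the coupling in your last paragraph, but the proposed fix --- a failure probability of $\delta/n$ \emph{conditional on the past} at each step plus a union bound --- is not a valid device for a sum: what must be controlled is the deviation of the entire accumulated operator, whose increments are martingale differences with respect to the natural filtration, and whose variance proxy itself depends on the inductive hypothesis about $P_{s-1}$. The proof in the literature this proposition rests on handles this with a matrix martingale inequality (matrix Freedman, as in online row sampling) combined with a stopping-time/induction argument. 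Your overall architecture --- sandwich bounds for approximate scores, spectral approximation implying the projection bound, induction on $t$ --- does match the cited proof, but these two steps need repair before the sketch becomes a proof.
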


Using this approximation result together with \cref{thm:mainLemma} (which is a more general version of \cref{thm:pkawv-appr-error}), we can bound the regret of \pkawv{} with KORS. The proof is postponed to Appendix~\ref{sec:proof_pkawv-nystrom}.

\begin{theorem}
\label{thm:pkawv-nystrom}
Let $n \geq 1$, $\delta >0$ and $\lambda \geq \mu >0$. Assume that the dictionaries $(\cI_t)_{t\geq 1}$ are built according to Proposition~\ref{thm:KORS}. Then,  probability at least $1-\delta$, \pkawv{} with the subspaces $\tilde \cH_t$ defined in~\eqref{eq:nystromsubspace} satisfies the regret upper-bound
\[
    \smash{R_n \leq \lambda \|f\|^2 + B^2\deff(\lambda)\log \big( e + en \kappa^2/\lambda \big) + 2B^2 (|\cI_n|+1)\frac{n \mu}{\lambda}\,,} \mystrut(10,5)
\]
and the algorithm runs in $O(\deff(\mu)^2)$ space $O(\deff(\mu)^2)$ time per iteration. 
\end{theorem}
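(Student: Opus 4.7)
The plan is to feed the Nyström-KORS subspace sequence into the time-varying generalization \cref{thm:mainLemma} of \cref{thm:pkawv-appr-error}, condition on the favorable KORS event from \cref{thm:KORS}, and then bound the effective-dimension-type sum via \cref{prop:upperbounddeff}. Since KORS only ever adds points, the subspaces $\tilde\cH_1\subset \tilde\cH_2\subset\cdots\subset\tilde\cH_n$ are nested, so the projections satisfy $P_sP_t=P_s$ for $s\le t$. This nestedness is the key structural property that \cref{thm:mainLemma} should exploit to yield an approximation term that is linear (rather than quadratic) in the final dictionary size $|\cI_n|$.

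More precisely, I would first apply \cref{thm:mainLemma} deterministically to get a bound of the form
\[
R_n(f)\leq \lambda\|f\|^2 + B^2\sum_{j=1}^n\log\!\left(1+\tfrac{\lambda_j(K_{nn})}{\lambda}\right) + (|\cI_n|+1)\,\frac{(\bar\mu+\lambda)\,n\bar\mu\,B^2}{\lambda^2},
\]
where $\bar\mu:=\max_{t\le n}\|(I-P_t)C_t^{1/2}\|^2$. The $(|\cI_n|+1)$ factor (compared with the fixed-subspace \cref{thm:pkawv-appr-error}) is the price paid for letting $\tilde\cH_t$ grow: each dictionary insertion contributes one extra AWV-type ``shift'' term and the nested structure lets these shifts accumulate additively. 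Next I condition on the event of \cref{thm:KORS}, which holds with probability at least $1-\delta$ and gives simultaneously $\bar\mu\le\mu$ and $|\cI_n|\le 9\deff(\mu)\log^2(2n/\delta)$. The assumption $\lambda\ge\mu$ then produces $(\bar\mu+\lambda)\le 2\lambda$, which collapses the approximation term into exactly $2B^2(|\cI_n|+1)n\mu/\lambda$, matching the statement.

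For the middle term, \cref{prop:upperbounddeff} immediately yields $\sum_j\log(1+\lambda_j(K_{nn})/\lambda)\le \deff(\lambda)\log(e+en\kappa^2/\lambda)$, producing the claimed regret. For the runtime, the \pkawv{} step \eqref{eq:pkawv-def} with subspace $\tilde\cH_t=\mathrm{Span}\{\phi(x):x\in\cI_t\}$ reduces, as detailed in the implementation appendix, to a Ridge-type update in a parameter space of dimension $|\cI_t|\le O(\deff(\mu))$ (up to polylogs absorbed in the $O(\cdot)$), and rank-one updates of the relevant inverse cost $O(|\cI_t|^2)=O(\deff(\mu)^2)$ per round; the KORS overhead is of the same order by \cref{thm:KORS}. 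Summing the two gives the stated $O(\deff(\mu)^2)$ per-iteration cost.

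The main obstacle is the first step: invoking \cref{thm:mainLemma} correctly to obtain the linear-in-$|\cI_n|$ scaling of the approximation error. This requires a telescoping-style argument that crucially uses $P_sP_t=P_s$, so that the regret comparison between \pkawv{} on $\tilde\cH_t$ and \kawv{} on $\cH$ decomposes into at most $|\cI_n|+1$ pieces, each controllable by $\bar\mu$. Once that structural bound is in hand, everything else is plug-and-play: the KORS event makes all random quantities uniform over $t$, and the inequality $\lambda\ge\mu$ is exactly what is needed to turn $(\mu+\lambda)/\lambda^2$ into the clean $2/\lambda$ appearing in the statement.
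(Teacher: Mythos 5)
Your proposal is correct and takes essentially the same route as the paper's proof: apply \cref{thm:mainLemma} to the nested KORS subspaces, use the KORS event of \cref{thm:KORS} together with nestedness to get $\mu_t \leq \mu\,\indic_{P_{t+1}\neq P_t}$ with at most $|\cI_n|+1$ nonzero terms (counting the convention $P_{n+1}=I$), use $\lambda \geq \mu$ to collapse $(\mu+\lambda)/\lambda^2$ into $2/\lambda$, bound the estimation term via \cref{prop:upperbounddeff}, and invoke the implementation appendix for the $O(\deff(\mu)^2)$ per-round cost. The only cosmetic difference is that you introduce $\bar\mu = \max_{t\leq n}\|(I-P_t)C_t^{1/2}\|^2$ and bound it by $\mu$ on the KORS event, whereas the paper bounds each $\mu_t$ directly; the substance is identical.
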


The last term of the regret upper-bound above corresponds to the approximation cost of using the approximation~\eqref{eq:nystromsubspace} in \pkawv{}. This costs is controlled by the parameter $\mu>0$ which trades-off between having a small approximation error (small $\mu$) and a small dictionary of size $|\cI_n| \approx \deff(\mu)$ (large $\mu$) and thus a small computational complexity. For the Gaussian Kernel, using that $\deff(\lambda) \leq O\big( \log (n/\lambda)^d\big)$, the above theorem yields for the choice $\lambda = 1$ and $\mu = n^{-2}$ a regret bound of order $R_n \leq O\big((\log n)^{d+1}\big)$ with a per-round time and space complexity of order $O(|\cI_n|^2) = O\big((\log n)^{2d+4}\big)$. We recover a similar result to the one obtained in Section~\ref{sec:taylor}.

\paragraph{Explicit rates under the capacity condition} Assuming the capacity condition  $\smash{\deff(\lambda')} \le \smash{\big( n/\lambda'\big)^\gamma}$ for  $0 \leq \gamma \leq 1$ and $\lambda'>0$, which is a classical assumption made on kernels \cite{rudi2015less}, the following corollary provides explicit rates for the regret according to the size of the dictionary $m \approx |\cI_n|$.

\begin{figure}
\centering
\begin{minipage}{.3\textwidth}
  \begin{center}
  \def\gam{0.35}
  \def\plotlegend{T}
  \begin{tikzpicture}[scale=.5]
      \large 
    \pgfmathsetmacro\yopt{\gam/(1+\gam)}
    \pgfmathsetmacro\yunopt{4*\gam/(1+\gam)^2}
    \pgfmathsetmacro\xx{2*\gam/(1+\gam)}
    \pgfmathsetmacro\xxx{2*\gam/(1-\gam^2)}
    
    \begin{axis}%
    [
        xmin=0,
        xmax=1.4,
        axis x line=bottom,
        line width=.3mm,
        axis y line=left,
        ytick={0,\yopt,\yunopt,1},
        yticklabels={0,$\frac{\gamma}{1+\gamma}$,$\frac{4\gamma}{(1+\gamma)^2}$,1},
        xtick={0,\xx,\xxx,1},
        xticklabels={0,$\frac{2\gamma}{1+\gamma}$,$\frac{2\gamma}{1-\gamma^2}$,1},
        ymin=0,
        ymax=1.4,
        ylabel=$b$,
        xlabel=$a$,
        restrict x to domain=0:1,
        legend style={draw=none},
        legend cell align={left}
    ]
        \addplot%
        [
            red,%
            dashed,
            line width=.3mm,
            mark=none,
            samples=100,
            domain=0:1,
        ]
        (x,{1-x/(1+\gam)});

        \addplot%
        [
            orange,%
            dashdotted,
            line width=.3mm,
            mark=none,
            samples=100,
            domain=0:(2*\gam*(1-\gam)/(1+\gam)^2),
        ]
        (x,{1+(\gam-1)*x/(2*\gam)});

        \addplot%
        [
            blue,%
            mark=none,
            line width=.3mm,
            samples=100,
            domain=0:(2*\gam/(1-\gam^2)),
        ]
        (x,{1+(\gam-1)*x/(2*\gam)});
         
        \addplot%
        [
            black,%
            dotted,
            line width=.4mm,
            mark=none,
            samples=100,
            domain=0:(2*\gam/(1+\gam)),
        ]
        (x,{1-x/(2*\gam)});

        \addplot%
        [
            blue,%
            mark=none,
            line width=.3mm,
            samples=100,
            domain=(2*\gam/(1-\gam^2)):1,
        ]
        (x,{\gam/(1+\gam)}); 

        \addplot%
        [
            orange,%
            dashdotted,
            line width=.3mm,
            mark=none,
            samples=100,
            domain=(2*\gam*(1-\gam)/(1+\gam)^2):1,
        ]
        (x,{1-(\gam-1)^2/(1+\gam)^2});

        \addplot%
        [
            black,%
            dotted,
            line width=.4mm,
            mark=none,
            samples=100,
            domain=(2*\gam/(1+\gam)):1,
        ]
        (x,{\gam/(1+\gam)});
    
    \normalsize
    \ifthenelse{\equal{\plotlegend}{T}}{
    \addlegendentry{Sketched-KONS~\cite{calandriello2017second}}
	\addlegendentry{Pros-N-KONS~\cite{calandriello2017efficient}}
	\addlegendentry{\pkawv{}}
	\addlegendentry{\pkawv{} (beforehand features)}}{}
\end{axis}
  \end{tikzpicture}
  \end{center}
 \end{minipage}
\begin{minipage}{.3\textwidth}
  \begin{center}
  \def\gam{0.41}
  \def\plotlegend{F}
  \begin{tikzpicture}[scale=.5]
      \large 
    \pgfmathsetmacro\yopt{\gam/(1+\gam)}
    \pgfmathsetmacro\yunopt{4*\gam/(1+\gam)^2}
    \pgfmathsetmacro\xx{2*\gam/(1+\gam)}
    \pgfmathsetmacro\xxx{2*\gam/(1-\gam^2)}
    
    \begin{axis}%
    [
        xmin=0,
        xmax=1.4,
        axis x line=bottom,
        line width=.3mm,
        axis y line=left,
        ytick={0,\yopt,\yunopt,1},
        yticklabels={0,$\frac{\gamma}{1+\gamma}$,$\frac{4\gamma}{(1+\gamma)^2}$,1},
        xtick={0,\xx,\xxx,1},
        xticklabels={0,$\frac{2\gamma}{1+\gamma}$,$\frac{2\gamma}{1-\gamma^2}$,1},
        ymin=0,
        ymax=1.4,
        ylabel=$b$,
        xlabel=$a$,
        restrict x to domain=0:1,
        legend style={draw=none},
        legend cell align={left}
    ]
        \addplot%
        [
            red,%
            dashed,
            line width=.3mm,
            mark=none,
            samples=100,
            domain=0:1,
        ]
        (x,{1-x/(1+\gam)});

        \addplot%
        [
            orange,%
            dashdotted,
            line width=.3mm,
            mark=none,
            samples=100,
            domain=0:(2*\gam*(1-\gam)/(1+\gam)^2),
        ]
        (x,{1+(\gam-1)*x/(2*\gam)});

        \addplot%
        [
            blue,%
            mark=none,
            line width=.3mm,
            samples=100,
            domain=0:(2*\gam/(1-\gam^2)),
        ]
        (x,{1+(\gam-1)*x/(2*\gam)});
         
        \addplot%
        [
            black,%
            dotted,
            line width=.4mm,
            mark=none,
            samples=100,
            domain=0:(2*\gam/(1+\gam)),
        ]
        (x,{1-x/(2*\gam)});

        \addplot%
        [
            blue,%
            mark=none,
            line width=.3mm,
            samples=100,
            domain=(2*\gam/(1-\gam^2)):1,
        ]
        (x,{\gam/(1+\gam)}); 

        \addplot%
        [
            orange,%
            dashdotted,
            line width=.3mm,
            mark=none,
            samples=100,
            domain=(2*\gam*(1-\gam)/(1+\gam)^2):1,
        ]
        (x,{1-(\gam-1)^2/(1+\gam)^2});

        \addplot%
        [
            black,%
            dotted,
            line width=.4mm,
            mark=none,
            samples=100,
            domain=(2*\gam/(1+\gam)):1,
        ]
        (x,{\gam/(1+\gam)});
    
    \normalsize
    \ifthenelse{\equal{\plotlegend}{T}}{
    \addlegendentry{Sketched-KONS~\cite{calandriello2017second}}
	\addlegendentry{Pros-N-KONS~\cite{calandriello2017efficient}}
	\addlegendentry{\pkawv{}}
	\addlegendentry{\pkawv{} (beforehand features)}}{}
\end{axis}
  \end{tikzpicture}
  \end{center}
 \end{minipage}
 \begin{minipage}{.3\textwidth}
  \begin{center}
  \def\gam{0.75}
  \def\plotlegend{F}
  \begin{tikzpicture}[scale=.5]
      \large 
    \pgfmathsetmacro\yopt{\gam/(1+\gam)}
    \pgfmathsetmacro\yunopt{4*\gam/(1+\gam)^2}
    \pgfmathsetmacro\xx{2*\gam/(1+\gam)}
    \pgfmathsetmacro\xxx{2*\gam/(1-\gam^2)}
    
    \begin{axis}%
    [
        xmin=0,
        xmax=1.4,
        axis x line=bottom,
        line width=.3mm,
        axis y line=left,
        ytick={0,\yopt,\yunopt,1},
        yticklabels={0,$\frac{\gamma}{1+\gamma}$,$\frac{4\gamma}{(1+\gamma)^2}$,1},
        xtick={0,\xx,\xxx,1},
        xticklabels={0,$\frac{2\gamma}{1+\gamma}$,$\frac{2\gamma}{1-\gamma^2}$,1},
        ymin=0,
        ymax=1.4,
        ylabel=$b$,
        xlabel=$a$,
        restrict x to domain=0:1,
        legend style={draw=none},
        legend cell align={left}
    ]
        \addplot%
        [
            red,%
            dashed,
            line width=.3mm,
            mark=none,
            samples=100,
            domain=0:1,
        ]
        (x,{1-x/(1+\gam)});

        \addplot%
        [
            orange,%
            dashdotted,
            line width=.3mm,
            mark=none,
            samples=100,
            domain=0:(2*\gam*(1-\gam)/(1+\gam)^2),
        ]
        (x,{1+(\gam-1)*x/(2*\gam)});

        \addplot%
        [
            blue,%
            mark=none,
            line width=.3mm,
            samples=100,
            domain=0:(2*\gam/(1-\gam^2)),
        ]
        (x,{1+(\gam-1)*x/(2*\gam)});
         
        \addplot%
        [
            black,%
            dotted,
            line width=.4mm,
            mark=none,
            samples=100,
            domain=0:(2*\gam/(1+\gam)),
        ]
        (x,{1-x/(2*\gam)});

        \addplot%
        [
            blue,%
            mark=none,
            line width=.3mm,
            samples=100,
            domain=(2*\gam/(1-\gam^2)):1,
        ]
        (x,{\gam/(1+\gam)}); 

        \addplot%
        [
            orange,%
            dashdotted,
            line width=.3mm,
            mark=none,
            samples=100,
            domain=(2*\gam*(1-\gam)/(1+\gam)^2):1,
        ]
        (x,{1-(\gam-1)^2/(1+\gam)^2});

        \addplot%
        [
            black,%
            dotted,
            line width=.4mm,
            mark=none,
            samples=100,
            domain=(2*\gam/(1+\gam)):1,
        ]
        (x,{\gam/(1+\gam)});
    
    \normalsize
    \ifthenelse{\equal{\plotlegend}{T}}{
    \addlegendentry{Sketched-KONS~\cite{calandriello2017second}}
	\addlegendentry{Pros-N-KONS~\cite{calandriello2017efficient}}
	\addlegendentry{\pkawv{}}
	\addlegendentry{\pkawv{} (beforehand features)}}{}
\end{axis}
  \end{tikzpicture}
  \end{center}
 \end{minipage}
\caption{Comparison of the theoretical regret rate $R_n = O(n^b)$ according to the size of the dictionary $m = O(n^a)$ considered by  \pkawv{}, Sketched-KONS and Pros-N-KONS for optimized parameters when $\smash{\deff(\lambda) \leq (n/\lambda)^\gamma}$ with $\gamma = 0.25, \sqrt{2}-1, 0.75$ (from left to right).}
\label{fig:rates}
\end{figure}
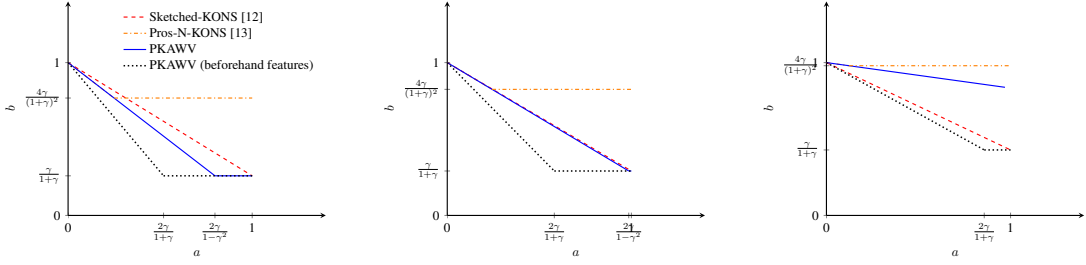

\begin{corollary}
\label{cor:krr_nystrom_rate}
Let $n \geq 1$ and $m \geq 1$. Assume that $\deff(\lambda') \leq (n/\lambda')^\gamma$ for all $\lambda' >0$. Then, under the assumptions of Theorem~\ref{thm:pkawv-nystrom}, \pkawv{} with $\mu = nm^{-1/\gamma}$ has a dictionary of size $\smash{|\cI_n| \lesssim m}$ and a regret upper-bounded with high-probability as
\begin{equation*}
        R_n \lesssim \left\{ \begin{array}{lll}
            n^{\frac{\gamma}{1+\gamma}} & \text{if } m \geq n^{\frac{2\gamma}{1-\gamma^2}} & \text{for } \lambda = n^\frac{\gamma}{1+\gamma} \\
            nm^{\frac{1}{2}-\frac{1}{2\gamma}} & \text{otherwise} & \text{for } \lambda = nm^{\frac{1}{2}-\frac{1}{2\gamma}}
        \end{array}\right. \,.
\end{equation*}
The per-round space and time complexity of the algorithm is $O(m^2)$ per iteration. 
\end{corollary}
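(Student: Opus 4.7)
The strategy is to substitute the capacity condition and the choice $\mu = nm^{-1/\gamma}$ directly into the regret bound of \cref{thm:pkawv-nystrom}, and then optimize $\lambda$ in two regimes depending on which term dominates.

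First I would use \cref{thm:KORS} to control the dictionary size. By assumption, $\deff(\mu) \leq (n/\mu)^\gamma = (m^{1/\gamma})^\gamma = m$, so $|\cI_n| \lesssim \deff(\mu) \lesssim m$ (the $\log^2 n$ factor from \cref{thm:KORS} is absorbed in $\lesssim$). This also gives per-round space and time $O(\deff(\mu)^2) = O(m^2)$, yielding the stated complexity. Plugging the capacity condition and this bound into \cref{thm:pkawv-nystrom} gives, up to logarithmic factors,
\[
R_n \lesssim \lambda + \Big(\frac{n}{\lambda}\Big)^{\!\gamma} + \frac{n\,\mu\,|\cI_n|}{\lambda} \lesssim \lambda + \Big(\frac{n}{\lambda}\Big)^{\!\gamma} + \frac{n^2 m^{1-1/\gamma}}{\lambda}.
\]

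Next I would optimize $\lambda$ in the two regimes. In the first regime, I would try to make the approximation term $n^2 m^{1-1/\gamma}/\lambda$ negligible, so that the effective bound reduces to the minimax-optimal $\lambda + (n/\lambda)^\gamma$. Balancing those two gives $\lambda = n^{\gamma/(1+\gamma)}$ and regret $n^{\gamma/(1+\gamma)}$. Requiring the approximation term to not exceed $\lambda$ yields the condition $n^2 m^{1-1/\gamma} \le \lambda^2 = n^{2\gamma/(1+\gamma)}$, which (since $1-1/\gamma < 0$) simplifies to $m \geq n^{2\gamma/(1-\gamma^2)}$, matching the threshold in the corollary. In the second regime $m < n^{2\gamma/(1-\gamma^2)}$, the approximation term dominates, so I would instead balance $\lambda$ against $n^2 m^{1-1/\gamma}/\lambda$, giving $\lambda = n m^{1/2 - 1/(2\gamma)}$ and regret of the same order $n m^{1/2 - 1/(2\gamma)}$; a quick check shows that in this regime $(n/\lambda)^\gamma = m^{(1-\gamma)/2}$ is indeed dominated by this choice exactly when $m \le n^{2\gamma/(1-\gamma^2)}$.

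Finally I would verify the hypothesis $\lambda \ge \mu$ needed to apply \cref{thm:pkawv-nystrom} in both regimes. In the first regime, $\mu = n m^{-1/\gamma} \le n^{\gamma/(1+\gamma)} = \lambda$ reduces to $m \ge n^{\gamma/(1+\gamma)}$, which follows from $m \ge n^{2\gamma/(1-\gamma^2)}$ since $2/(1-\gamma^2) \ge 1/(1+\gamma)$. In the second regime, $\lambda = n m^{1/2 - 1/(2\gamma)} \ge n m^{-1/\gamma} = \mu$ reduces to $m^{1/2 + 1/(2\gamma)} \ge 1$, which is trivial. I don't expect any serious obstacle: the whole proof is essentially bookkeeping, and the only mildly delicate point is tracking the two regimes consistently and checking that the threshold $m = n^{2\gamma/(1-\gamma^2)}$ at which they meet is the same whether derived from ``approximation term $\le \lambda$'' or from ``$(n/\lambda)^\gamma \le n m^{1/2-1/(2\gamma)}$''.
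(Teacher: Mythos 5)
Your proposal is correct and follows essentially the same route as the paper's proof: plug the capacity condition and $\mu = nm^{-1/\gamma}$ into the bound of \cref{thm:pkawv-nystrom} to get $R_n \lesssim \lambda + (n/\lambda)^\gamma + n^2 m^{1-1/\gamma}/\lambda$, then split into the two regimes with the same choices of $\lambda$ and the same threshold $m = n^{2\gamma/(1-\gamma^2)}$ (the paper phrases the choice as $\mu = \deff^{-1}(m)$ and deduces $\mu \le nm^{-1/\gamma}$, a cosmetic difference). Your explicit verification that $\lambda \ge \mu$ holds in both regimes is a small but genuine improvement, since that hypothesis of \cref{thm:pkawv-nystrom} is left unchecked in the paper's argument.
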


The rate of order $n^{\frac{\gamma}{1+\gamma}}$ is optimal in this case (it corresponds to optimizing~\eqref{eq:optimalRegret} in $\lambda$). If the dictionary is large enough $\smash{m \geq n^{2\gamma/(1-\gamma^2)}}$, the approximation term is negligible and the algorithm recovers the optimal rate. This is possible for a small dictionary $m = o(n)$  whenever $\smash{2\gamma/(1-\gamma^2) <1}$, which corresponds to $\smash{\gamma < \sqrt{2}-1}$. The rates obtained in Corollary~\ref{cor:krr_nystrom_rate} can be compared to the one obtained by Sketched-KONS of~\cite{calandriello2017second} and Pros-N-KONS of \cite{calandriello2017efficient} which also provide a similar trade-off between the dictionary size $m$ and a regret bound. The forms of the regret bounds in $m$, $\mu$, $\lambda$ of the algorithms can be summarized as follow
\begin{equation}
    R_n  \lesssim \left\{ 
        \begin{array}{ll}
            \lambda + \deff(\lambda) + \frac{nm\mu}{\lambda}  & \text{for \pkawv{} with KORS} \\ 
            \lambda  + \frac{n}{m} \deff(\lambda)  & \text{for Sketched-KONS} \\
            m(\lambda + \deff(\lambda)) + \frac{n\mu}{\lambda} & \text{for Pros-N-KONS}
        \end{array}\right. \,.
        \label{eq:rates_comparison}
\end{equation}
When $\deff(\lambda) \leq (n/\lambda)^\gamma$, optimizing these bounds in $\lambda$, \pkawv{} performs better than Sketched-KONS as soon as $\gamma \leq 1/2$ and the latter cannot obtain the optimal rate $\smash{\lambda + \deff(\lambda) = n^{\frac{\gamma}{1+\gamma}}}$ if $m = o(n)$. Furthermore, because of the multiplicative factor $m$, Pros-N-KONS can't either reached the optimal rate even for $m=n$.  Figure~\ref{fig:rates} plots the rate in $n$ of the regret of these algorithms when enlarging the size $m$ of the dictionary. We can see that for $\gamma =1/4$, \pkawv{} is the only algorithm that achieves the optimal rate $n^{\gamma/(1+\gamma)}$ with $m = o(n)$ features. The rate of Pros-N-KONS cannot beat $4\gamma/(1+\gamma)^2$ and stops improving even when the size of dictionary increases. This is because Pros-N-KONS is restarted whenever a point is added in the dictionary which is too costly for large dictionaries. It is worth pointing out that these rates are for well-tuned value of $\lambda$. However, such an optimization can be performed at small cost using expert advice algorithm on a finite grid of~$\lambda$.

\paragraph{Beforehand known features} We may assume that the sequence of feature vectors $x_t$ is given in advance to the learner while only the outputs $y_t$ are sequentially revealed (see \cite{gaillard2018uniform} or \cite{Bartlett2015} for details). In this case, the complete dictionary $\cI_n \subset \{x_1,\dots,x_n\}$ may be computed beforehand and \pkawv{} can be used with the fix subspace $\tilde \cH = \mathrm{Span}(\phi(x),x\in \cI_n)$. 
In this case, the regret upper-bound can be improved to $R_n \lesssim \lambda + \deff(\lambda) + \frac{n\mu}{\lambda}$ by removing a factor $m$ in the last term (see~\eqref{eq:rates_comparison}). 

\begin{corollary}
\label{cor:krr_nystrom_known_features_rate}
Under the notation and assumptions of Corollary~\ref{cor:krr_nystrom_rate}, \pkawv{} used with dictionary $\cI_n$ and parameter $\mu = nm^{-1/\gamma}$ achieves with high probability
\begin{equation*}
        R_n \lesssim \left\{ \begin{array}{lll}
            n^{\frac{\gamma}{1+\gamma}} & \text{if } m \geq n^{\frac{2\gamma}{1+\gamma}} & \text{for } \lambda = n^\frac{\gamma}{1+\gamma} \\
            nm^{-\frac{1}{2\gamma}} & \text{otherwise} & \text{for } \lambda = nm^{-\frac{1}{2\gamma}}
        \end{array}\right. \,.
\end{equation*}
Furthermore, w.h.p. the dictionary is of size $|\cI_n| \lesssim m$ leading to a per-round space and time complexity $O(m^2)$. 

\end{corollary}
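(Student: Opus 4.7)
The plan is to apply Theorem~\ref{thm:pkawv-appr-error} directly, exploiting the fact that when the feature vectors are known in advance we may use a single fixed subspace $\tilde{\cH}$ for all rounds, which avoids the extra $|\cI_n|$ factor that appears in Theorem~\ref{thm:pkawv-nystrom}. Concretely, I would first run the KORS procedure of~\cite{calandriello2017efficient} offline on the full sequence $x_1,\dots,x_n$ with parameter $\mu$, producing a dictionary $\cI_n$ that, by Proposition~\ref{thm:KORS}, satisfies $\|(I-P)C_n^{1/2}\|^2 \leq \mu$ and $|\cI_n| \lesssim \deff(\mu)\log^2(n/\delta)$ with probability $1-\delta$, where $P$ is the orthogonal projection onto $\tilde{\cH} := \mathrm{Span}\{\phi(x): x\in\cI_n\}$. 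Under the capacity condition $\deff(\mu) \leq (n/\mu)^\gamma$, choosing $\mu = n m^{-1/\gamma}$ immediately gives $|\cI_n| \lesssim m$ (up to logs), hence the claimed $O(m^2)$ per-round cost.

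Next, I would apply Theorem~\ref{thm:pkawv-appr-error} with this fixed subspace $\tilde{\cH}$ and combine it with Proposition~\ref{prop:upperbounddeff} to bound the eigenvalue sum by $\deff(\lambda)\log(e + en\kappa^2/\lambda)$. Using that $\lambda \geq \mu$, the approximation remainder simplifies via $(\mu+\lambda)\frac{n\mu}{\lambda^2} \leq \frac{2n\mu}{\lambda}$, yielding
\begin{equation*}
R_n(f) \;\lesssim\; \lambda\|f\|^2 \;+\; \deff(\lambda)\,\log\!\bigl(e + en\kappa^2/\lambda\bigr) \;+\; \frac{n\mu}{\lambda},
\end{equation*}
which is exactly the improved form announced just above the corollary and has no $|\cI_n|$ factor in the last term (this is the gain from the fixed projection, and the key point where beforehand knowledge of the features matters).

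Finally, I would plug in the capacity condition $\deff(\lambda)\leq (n/\lambda)^\gamma$ and $\mu = n m^{-1/\gamma}$ and optimize $\lambda$ in two regimes. In the regime $m \geq n^{2\gamma/(1+\gamma)}$, the approximation term $n^2 m^{-1/\gamma}/\lambda$ is dominated by $\lambda + (n/\lambda)^\gamma$, which is balanced at $\lambda = n^{\gamma/(1+\gamma)}$, giving the optimal rate $R_n \lesssim n^{\gamma/(1+\gamma)}$; a direct calculation verifies that the threshold on $m$ is precisely what is needed for $n^2 m^{-1/\gamma}/\lambda \lesssim n^{\gamma/(1+\gamma)}$. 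In the complementary regime $m < n^{2\gamma/(1+\gamma)}$, the $\deff$ term is negligible and $\lambda + n^2 m^{-1/\gamma}/\lambda$ is balanced at $\lambda = n m^{-1/(2\gamma)}$, giving $R_n \lesssim n m^{-1/(2\gamma)}$; one checks the consistency conditions $\lambda \geq \mu$ and $(n/\lambda)^\gamma \lesssim \lambda$, both of which reduce to $m \lesssim n^{2\gamma/(1+\gamma)}$.

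No step poses a real obstacle, since all the heavy lifting (the regret bound, the approximation guarantee for KORS, and the effective-dimension bound) is already available. The only point requiring attention is justifying the use of the fixed-subspace Theorem~\ref{thm:pkawv-appr-error} rather than the time-varying version underlying Theorem~\ref{thm:pkawv-nystrom}: this is legitimate precisely because, with the $x_t$'s known in advance, we may compute $\cI_n$ once and for all before round $1$ and set $\tilde{\cH}_t = \tilde{\cH}$ for every $t$, which is what eliminates the extraneous $|\cI_n|$ factor and yields the sharper rates above.
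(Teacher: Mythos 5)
Your proposal is correct and follows essentially the same route as the paper: apply the fixed-subspace bound of Theorem~\ref{thm:pkawv-appr-error} (rather than the time-varying Theorem used for Theorem~\ref{thm:pkawv-nystrom}) to the beforehand-computed KORS dictionary, which removes the $|\cI_n|$ factor, then invoke the capacity condition and optimize $\lambda$ in the two regimes separated by $m \gtrless n^{2\gamma/(1+\gamma)}$. If anything, your treatment of the second regime (with $\lambda = nm^{-1/(2\gamma)}$ and the consistency checks $\lambda \geq \mu$ and $(n/\lambda)^\gamma \lesssim \lambda$) is cleaner than the paper's, whose proof text contains a copy-paste slip there while reaching the same stated rates.
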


The suboptimal rate due to a small dictionary is improved by a factor $\sqrt{m}$ compared to the ``sequentially revealed features'' setting. Furthermore, since $2\gamma/(1+\gamma) < 1$ for all $\gamma \in (0,1)$, the algorithm is able to recover the optimal rate $n^{\gamma/(1+\gamma)}$ for all $\gamma \in (0,1)$ with a dictionary of sub-linear size $m \ll n$. We leave for future work the question whether there is really a gap between these two settings or if this gap from a suboptimality of our analysis.

\begin{figure*}[t]
\centering
\includegraphics[scale=.18]{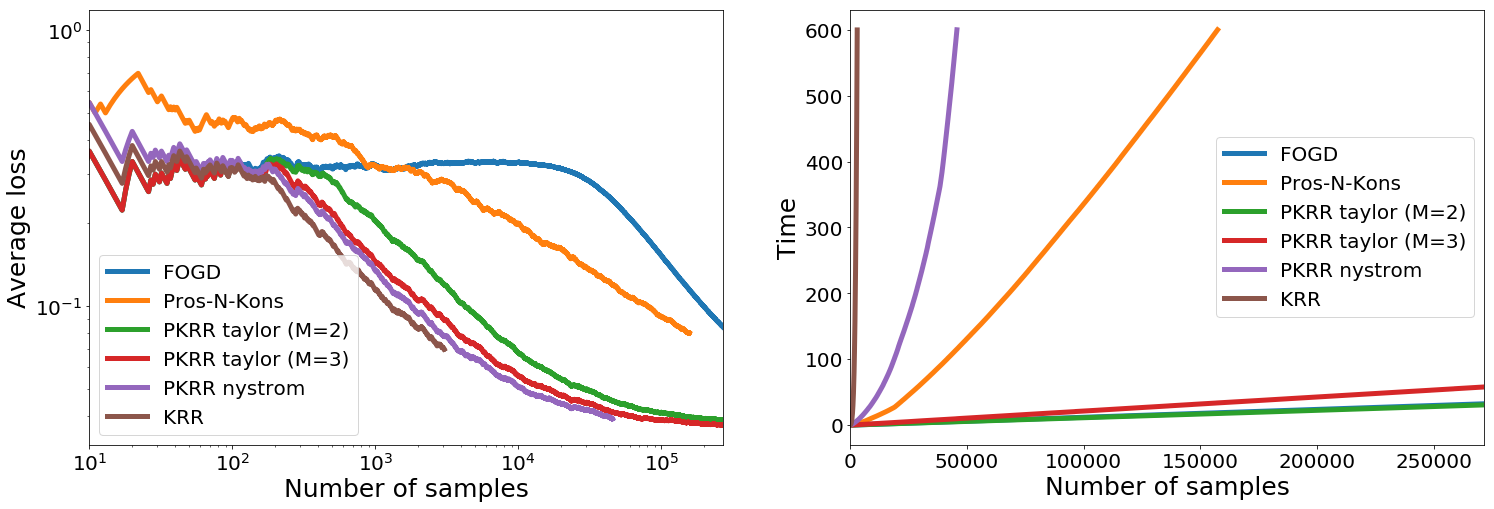}
\includegraphics[scale=.18]{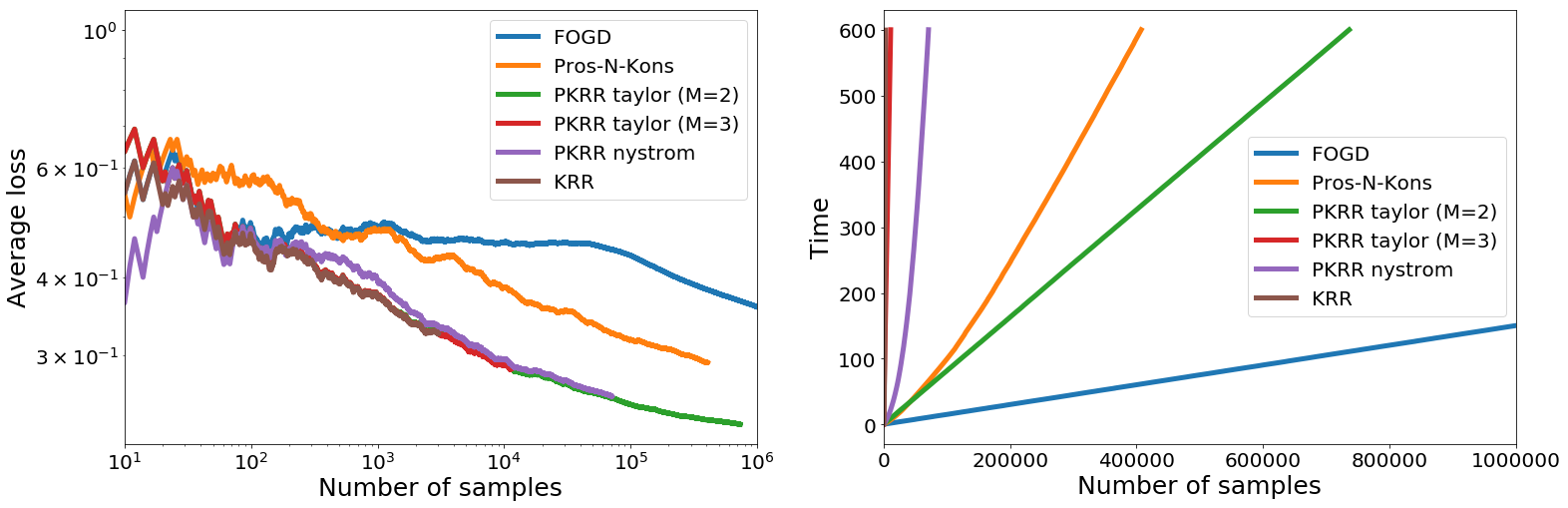}
\caption{Average classification error and time on: (top) code-rna ($n=2.7\times10^5,~~d=8$); (bottom) SUSY ($n=6\times10^6,~~d=22$).}
\label{susy_perf}
\end{figure*}

\section{Experiments}
\label{sec:experiments}
We empirically test \pkawv{} against several state-of-the-art algorithms for online kernel regression. In particular we test our algorithms in (1) an adversarial setting [see Appendix \ref{sec:exp-adv}], (2) on large scale datasets. The following algorithms have been tested:
\begin{itemize}[topsep=-3pt,nosep,nolistsep]
    \item Kernel-AWV for adversarial setting or Kernel Ridge Regression for i.i.d. real data settings;
    \item Pros-N-Kons \cite{calandriello2017second};
    \item  Fourier Online Gradient Descent (FOGD, \cite{lu2016large});
    \item \pkawv (or Projected-KRR for real data settings) with Taylor expansions  ($M \in \{2,3,4\})$
    \item \pkawv (or Projected-KRR for real data settings) with Nyström
\end{itemize}
The algorithms above have been implemented in python with numpy (the code for our algorithm is in \cref{app:python-code}). For most algorithms we used hyperparameters from the respective papers. For all algorithms and all experiments, we set $\sigma = 1$ \citep[except for SUSY where $\sigma = 4$, to match accuracy results from][]{rudi2017falkon} and $\lambda = 1$. When using KORS, we set $\mu = 1$, $\beta = 1$ and $\varepsilon = 0.5$ as in \cite{calandriello2017second}. The number of random-features in FOGD is fixed to $1000$ and the learning rate $\eta$ is $\smash{1/\sqrt{n}}$. All experiments have been done on a single desktop computer (Intel Core i7-6700) with a timeout of $5$-min per algorithm. The results of the algorithms are only recorded until this time.

\noindent{\bf Large scale datasets.}
 The algorithms are evaluated on four datasets from UCI machine learning repository. In particular \texttt{casp} (regression) and \texttt{ijcnn1}, \texttt{cod-rna}, \texttt{SUSY} (classification) [see Appendix \ref{sec:other_dataset} for \texttt{casp} and \texttt{ijcnn1}] ranging from $4\times10^4$ to $6\times10^6$ datapoints. For all datasets, we scaled $x$ in $[-1,1]^d$ and $y$ in $[-1,1]$. In  \cref{casp_perf,susy_perf} we show the average loss (square loss for regression and classification error for classification) and the computational costs of the considered algorithm.

%


In all the experiments \pkawv{} with $M = 2$ approximates reasonably well the performance of kernel forecaster and is usually very fast. We remark that using \pkawv{} $M=2$ on the first million examples of \texttt{SUSY}, we achieve in $10$ minutes on a single desktop, the same average classification error obtained with specific large scale methods for i.i.d. data \citep{rudi2017falkon}, although \kawv{} is using a number of features reduced by a factor $100$ with respect to the one used in for FALKON in the same paper. Indeed they used $r = 10^4$ Nystr\"om centers, while with $M=2$ we used $r=190$ features, validating empirically the effectiveness of the chosen features for the Gaussian kernel. This shows the effectiveness of the proposed approach for large scale machine learning problems with moderate dimension $d$.



\bibliography{biblio}
\bibliographystyle{unsrt}  

\newpage
\hrule
\begin{center}
{\huge Supplementary material}
\end{center}
\hrule

\appendix

The supplementary material is organized as follows:
\begin{itemize}
    \item  Appendix~\ref{app:definitions} starts with notations and useful identities that are used in the rest of the proofs
    \item Appendix~\ref{app:mainLemma},~\ref{app:proofsbackground},~\ref{app:proofTaylor},~\ref{app:proof_Nystrom}, and~\ref{app:additional_proofs}  contain the proofs mostly in order of appearance:
    \begin{itemize}
        \item Appendix~\ref{app:mainLemma}: statement and proof of our main theorem on which are based most of our results.
        \item Appendix~\ref{app:proofsbackground}: proofs of Section~\ref{sec:background} (Propositions~\ref{prop:nonLinearRidge} and \ref{prop:upperbounddeff})
        \item Appendix~\ref{app:proofTaylor}: proofs of section~\ref{sec:taylor} (Theorem~\ref{thm:pkawv-appr-error} and~\ref{thm:phi-awv-gaussian})
        \item Appendix~\ref{app:proof_Nystrom}: proofs of section~\ref{sec:nystrom} (Theorem~\ref{cor:krr_nystrom_rate} and Corollaries~\ref{cor:krr_nystrom_rate} and \ref{cor:krr_nystrom_known_features_rate})
        \item Appendix~\ref{app:additional_proofs}: proofs of additional lemmas.
    \end{itemize} 
    \item Appendix~\ref{app:other_experiments} provides additional experimental results (adversarial simulated data and large-scale real datasets).
    \item Appendix~\ref{app:implementation} describes efficient implementations of our algorithms together with the Python code used for the experiments.
\end{itemize}

\section{Notations and relevant equations}
\label{app:definitions}

In this section, we give notations and useful identities which will be used in following proofs. We recall that at time $t \geq 1$, the forecaster is given an input $x_t \in \cX \subset \R^d$, chooses a prediction function $\hat f_t \in \smash{\tilde \cH_t} \subset \cH$,  forecasts $\smash{\hat y_t = \hat f_t(x_t)}$ and observes $\hat y_t \in [-B,B]$. Moreover, $\cH$ is the RKHS associated to the kernel $k:(x,x') \in \cX \times \cX = \phi(x)^\top \phi(x')$ for some feature map $\phi:\cX \to \R^D$. We also define the following notations for all $t\geq 1$:

\begin{itemize}[nosep, label={--}]
	\item $Y_t = (y_1,\dots,y_t)^\top\in \R^t$ and $\hat Y_t = (\hat y_1,\dots,\hat y_t)^\top \in \R^t$
	\item $P_t:\cH \to \tilde \cH_t$ is the Euclidean projection on $\tilde \cH_{t}$
	\item $C_t := \sum_{i=1}^t \phi(x_i) \otimes \phi(x_i)$ is the covariance operator at time $t\geq 1$; 
	\item $A_t := C_t + \lambda I$ is the regularized covariance operator;
	\item $S_t:\cH\to \R^t$ is the operator such that $[S_tf]_i = f(x_i) = \scal{f}{\phi(x_i)}$ for any $f \in \cH$; 
	\item $L_t:= f \in \cH  \mapsto  \big\|Y_t - S_t f\big\|^2 + \lambda \|f\|^2$ is the regularized cumulative loss. 
\end{itemize}

The prediction function of \pkawv{} at time $t \geq 1$ is defined (see Definition~\ref{eq:pkawv-def}) as
\begin{equation*}
	\hat f_t = \arg\min_{f\in {\tilde \cH_t}}\left\{\sum_{s=1}^{t-1}\big(y_s - f(x_s)\big)^2 + \lambda \|f\|^2 + f(x_t)^2 \right\} \,. 
\end{equation*}
Standard calculation shows the equality 
\begin{equation}
    \label{eq:f_def}
    \hat f_t = P_t \tilde A_t^{-1} P_t S^*_{t-1} Y_{t-1} \,.
\end{equation}

We define also the best functions in the subspace $\tilde \cH_t$ and $\tilde \cH_{t+1}$ at time $t\geq 1$,
\begin{equation}
	\hat g_{t+1} = \arg\min_{f\in \tilde \cH_{t}}\left\{ L_{t}(f)\right\}= P_t \tilde A_{t}^{-1} P_t S^*_{t} Y_{t} \,,
	\label{eq:g_def}
\end{equation}
\begin{equation}
	\tilde g_{t+1} = \arg\min_{f\in \tilde \cH_{t+1}}\left\{ L_{t}(f)\right\}= P_{t+1} (P_{t+1} C_t P_{t+1} + \lambda I)^{-1} P_{t+1} S^*_{t} Y_{t} \,,
	\label{eq:g_tilde_def}
\end{equation}
and the best function in the whole space $\cH$
\begin{equation}
	\hat h_{t+1} = \arg\min_{f\in \cH}\left\{ L_{t}(f)\right\}= A_t^{-1} S^*_{t} Y_{t} \,.
	\label{eq:h_def}
\end{equation}

\section{Main theorem (statement and proof)}
\label{app:mainLemma}

In this appendix, we provide a general upper-bound on the regret of \pkawv{} that is valid for any sequence of projections $P_1,...,P_n$ associated with the sequence $\smash{\tilde \cH_1},\dots,\smash{\tilde \cH_n}$. Many of our results will be corollaries of the following theorem for specific sequences of projections.

\begin{theorem}
\label{thm:mainLemma}
Let $\tilde \cH_1,\dots,\tilde \cH_n$ be a sequence of linear subspaces of $\cH$ associated with projections $P_1,\dots,P_n \in \cH \to \cH$. \pkawv{} with regularization parameter $\lambda >0$ satisfies the following upper-bound on the regret: for all $f \in \cH$
\[
	R_n(f) \leq \sum\limits_{t=1}^n y_t^2 \scal{\tilde A_t^{-1} P_t \phi(x_t)}{P_t \phi(x_t)}
    	+ (\mu_{t} + \lambda) \frac{\mu_{t} t B^2}{\lambda}  \,,
\]
for any sequence $(x_1,y_1),\dots,(x_n,y_n) \in \cX \times [-B,B]$ and where $\mu_{t} := \big\|(P_{t+1} - P_t) C_{t}^{1/2}\big\|^2$ and $P_{n+1} := I$.
\end{theorem}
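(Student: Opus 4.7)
The plan is to mimic the classical AWV telescoping argument, but carried out inside the \emph{moving} finite-dimensional subspaces $\tilde\cH_t$, paying a correction each time the subspace changes. The auxiliary minimizers $\hat g_{t+1}, \tilde g_{t+1}, \hat h_{t+1}$ defined in~\eqref{eq:g_def}--\eqref{eq:h_def} will keep the bookkeeping clean. The convention $P_{n+1} = I$ is essential: it makes $\tilde\cH_{n+1} = \cH$, so that the telescoped sum terminates at the unprojected Ridge optimum, which in turn dominates $L_n(f)$ for any $f \in \cH$.

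First I would observe that, for each $t$ held fixed, the predictor $\hat f_t$ in~\eqref{eq:f_def} is exactly the AWV (nonlinear Ridge) forecaster on the RKHS $\tilde\cH_t$. Applying the standard AWV per-step identity inside $\tilde\cH_t$ (the same identity that proves Proposition~\ref{prop:nonLinearRidge} via Theorem~11.8 of \cite{Cesa-Bianchi2006}) yields
\begin{equation*}
(y_t-\hat y_t)^2 \;+\; L_t(\hat g_{t+1}) \;\le\; L_{t-1}(\hat g_t) \;+\; y_t^2\,\bigl\langle \tilde A_t^{-1} P_t\phi(x_t),\, P_t\phi(x_t)\bigr\rangle,
\end{equation*}
where $\hat g_t, \hat g_{t+1}$ are both the within-subspace minimizers on $\tilde\cH_t$. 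This produces the first term of the claimed bound; what remains is to aggregate these inequalities across $t$. They do \emph{not} telescope directly, because $\hat g_{t+1}$ sits in $\tilde\cH_t$ while $\hat g_{t+2}$ sits in $\tilde\cH_{t+1}$.

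Next I would bridge consecutive subspaces using $\tilde g_{t+1}$ from~\eqref{eq:g_tilde_def}, the minimizer of $L_t$ on the \emph{next} subspace $\tilde\cH_{t+1}$. The key estimate to prove is the ``subspace-change correction''
\begin{equation*}
L_t(\hat g_{t+1}) - L_t(\tilde g_{t+1}) \;\le\; (\mu_t+\lambda)\,\frac{\mu_t\, t B^2}{\lambda},
\end{equation*}
with $\mu_t = \|(P_{t+1}-P_t)C_t^{1/2}\|^2$. The proof uses the resolvent identity applied to the regularized projected covariances $P_t C_t P_t + \lambda I$ and $P_{t+1}C_tP_{t+1} + \lambda I$, noting that their difference factors through $(P_{t+1}-P_t)C_t^{1/2}$, together with the Ridge norm bounds $\|\hat g_{t+1}\|^2,\|\tilde g_{t+1}\|^2 \le tB^2/\lambda$ which follow from $\|Y_t\|^2 \le tB^2$. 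This is the main technical step and the place where the spectral quantity $\mu_t$ enters naturally.

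Finally, substituting the correction into the per-step identity and summing in $t=1,\dots,n$ makes the chain telescope down to $L_n(\tilde g_{n+1})$; since $P_{n+1}=I$ forces $\tilde g_{n+1} = \hat h_{n+1}$, we have $L_n(\tilde g_{n+1}) \le L_n(f)$ for every $f \in \cH$. Subtracting $\sum_t (y_t-f(x_t))^2$ from both sides then recovers $R_n(f)$ bounded by the two stated terms. The principal obstacle is unquestionably the resolvent-perturbation argument in the third paragraph, because the projected covariances $P_tC_tP_t$ and $P_{t+1}C_tP_{t+1}$ act on overlapping but distinct subspaces; extracting the clean spectral quantity $\mu_t$ and an explicit constant, rather than a generic operator norm, is what makes the bound strong enough to be useful downstream for Theorem~\ref{thm:pkawv-appr-error}.
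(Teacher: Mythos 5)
Your proposal has the same skeleton as the paper's proof: within-subspace AWV telescoping with the auxiliary minimizers $\hat g_{t+1},\tilde g_{t+1}$, a per-step subspace-change correction $L_t(\hat g_{t+1})-L_t(\tilde g_{t+1})$ (the paper's $\Omega(t+1)$), and the convention $P_{n+1}=I$ so that the chain terminates at $\hat h_{n+1}$ with $L_n(\hat h_{n+1})\le L_n(f)$. Before the main issue, one slip: your displayed per-step inequality has the two cumulative losses on the wrong sides. Since $L_t\ge L_{t-1}$ pointwise, $L_{t-1}(\tilde g_t)=\min_{\tilde \cH_t}L_{t-1}\le \min_{\tilde \cH_t}L_t=L_t(\hat g_{t+1})$, so your inequality would imply $(y_t-\hat y_t)^2\le y_t^2\scal{\tilde A_t^{-1}P_t\phi(x_t)}{P_t\phi(x_t)}$, which already fails at $t=1$ for large $\lambda$ (the left side tends to $y_1^2$, the right side to $0$). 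The standard identity, and the one your own telescoping plan needs, is $(y_t-\hat y_t)^2+L_{t-1}(\tilde g_t)\le L_t(\hat g_{t+1})+y_t^2\scal{\tilde A_t^{-1}P_t\phi(x_t)}{P_t\phi(x_t)}$.

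The genuine gap is the correction bound, which you correctly identify as the main technical step but do not actually prove. A resolvent identity between $P_tC_tP_t+\lambda I$ and $P_{t+1}C_tP_{t+1}+\lambda I$ together with the crude ridge norms $\nor{\hat g_{t+1}}^2,\nor{\tilde g_{t+1}}^2\le tB^2/\lambda$ cannot yield $(\mu_t+\lambda)\mu_t tB^2/\lambda$, because these tools produce only \emph{one} factor of $\mu_t$. Concretely, by optimality of $\tilde g_{t+1}$ over $\tilde\cH_{t+1}$ and of $\hat g_{t+1}$ over $\tilde\cH_t\subseteq\tilde\cH_{t+1}$ one gets $L_t(\hat g_{t+1})-L_t(\tilde g_{t+1})\le\scal{w}{A_t w}\le(\mu_t+\lambda)\nor{w}^2$ with $w=(P_{t+1}-P_t)\tilde g_{t+1}$; the entire difficulty is bounding $\nor{w}^2$. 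Your ridge norm bound gives only $\nor{w}^2\le tB^2/\lambda$, hence a correction of order $(\mu_t+\lambda)tB^2/\lambda$: with a fixed subspace ($P_t=P$ for $t\le n$, $P_{n+1}=I$) this single correction at $t=n$ is at least $nB^2$, a vacuous regret bound from which Theorem~\ref{thm:pkawv-appr-error} and everything downstream cannot be recovered. (The resolvent route fares no better: the operator difference equals $(P_{t+1}-P_t)C_tP_{t+1}+P_tC_t(P_{t+1}-P_t)$, which besides the factor $(P_{t+1}-P_t)C_t^{1/2}$ carries an unpaired $C_t^{1/2}$ of norm up to $\kappa\sqrt t$, or else reduces back to bounding $\nor{w}$.) The missing idea, and the crux of the paper's argument, is the refined estimate $\nor{(P_{t+1}-P_t)\tilde g_{t+1}}^2\le \mu_t tB^2/\lambda^2$: one must exploit the explicit ridge form of $\tilde g_{t+1}$, writing $S_t^*=C_t^{1/2}V$ for a partial isometry $V$ (since $S_t^*S_t=C_t$) and commuting $C_t^{1/2}$ past the regularized inverse, so that the projection difference is paired with $C_t^{1/2}$ and contributes $\sqrt{\mu_t}$, while $\nor{A_t^{-1}}\le 1/\lambda$ and $\nor{Y_t}\le\sqrt t B$ supply the rest. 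Your sketch never pairs $(P_{t+1}-P_t)$ with $C_t^{1/2}$ acting on the solution itself, so $\mu_t$ enters only once and the bound obtained is too weak to prove the theorem. (Note also that both your argument and the paper's implicitly require the subspaces to be nested, $\tilde\cH_t\subseteq\tilde\cH_{t+1}$, which holds in all of the paper's applications.)
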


\begin{proof}
Let $f\in \cH$. By definition of $\hat h_{n+1}$ (see~\eqref{eq:h_def}), we have $L_{n}(\hat h_{n+1}) \leq L_n(f)$ which implies by definition of $L_n$ that
\begin{equation}
	\big\|Y_n - S_n \hat h_{n+1}\big\|^2 - \big\|Y_n - S_n f \big\|^2 \leq \lambda \|f\|^2 - \lambda \|\hat h_{n+1}\|^2  \,.
	\label{eq:firsteq}
\end{equation}
Now, the regret can be upper-bounded as 
\begin{align}
	R_n(f) & \stackrel{\eqref{eq:defRegret}}{:=}  \sum_{t=1}^n (y_t-\hat y_t)^2 - \sum_{t=1}^n \big(y_t - f(x_t)\big)^2 \\
	    &  = \big\|Y_n - \hat Y_n\big\|^2 - \big\|Y_n - S_nf\big\|^2  \nonumber \\
		& \stackrel{\eqref{eq:firsteq}}{\leq}   \big\|Y_n - \hat Y_n\big\|^2 - \big\|Y_n - S_n \hat h_{n+1}\big\|^2 + \lambda \|f\|^2 - \lambda \|\hat h_{n+1}\|^2 \nonumber \\
		& \leq  \lambda \|f\|^2 +  \underbrace{\big\|Y_n - \hat Y_n\big\|^2 - \big\|Y_n - S_n \hat g_{n+1}\big\|^2 - \lambda \|\hat g_{n+1} \|^2}_{Z_1} \label{eq:defZ}\\
		& \hspace{1cm} + \underbrace{\big\|Y_n - S_n \hat g_{n+1}\big\|^2 + \lambda \|\hat g_{n+1}\|^2  - \big\|Y_n - S_n \hat h_{n+1}\big\|^2 - \lambda \|\hat h_{n+1}\|^2}_{\Omega(n+1)}  \nonumber 
\end{align}
The first term $Z_1$ mainly corresponds to the estimation error of the algorithm: the regret incurred with respect to the best function in the approximation space $\smash{\tilde \cH_n}$. It also includes an approximation error due to the fact that the algorithm does not use $\smash{\tilde \cH_n}$ but the sequence of approximation $\smash{\tilde \cH_1},\dots, \smash{\tilde \cH_n}$. The second term $\Omega(n+1)$ corresponds to the approximation error of $\cH$ by $\smash{\tilde \cH_n}$.  Our analysis will focus on upper-bounding both of these terms separately.

\paragraph{Part 1. Upper-bound of the estimation error $Z_1$.} Using a telescoping argument together with the convention $L_0(\hat g_1) = 0$, we have
\[
    \big\|Y_n - S_n \hat g_{n+1}\big\|^2 +  \lambda \|\hat g_{n+1} \|^2 = L_n(\hat g_{n+1}) = \sum_{t=1}^n L_{t}(\hat g_{t+1}) - L_{t-1}(\hat g_t)\,.
\]
Substituted into the definition of $Z_1$ (see~\eqref{eq:defZ}), the latter can be rewritten as
\begin{align}
    Z_1 &= \sum_{t=1}^n \big[(y_t-\hat y_t)^2 +  L_{t-1}(\hat g_{t}) - L_{t}(\hat g_{t+1})\big] \nonumber \\
    &= \sum_{t=1}^n \big[\underbrace{(y_t-\hat y_t)^2 +  L_{t-1}(\tilde g_{t})- L_t(\hat g_{t+1})}_{Z(t)} + \underbrace{L_{t-1}(\hat g_{t}) - L_{t-1}(\tilde g_{t})}_{\Omega(t)} \big] \,.
		\label{eq:regret1}
\end{align}

where $\tilde g_t = P_t(P_t C_{t-1} P_t + \lambda I)^{-1} P_t S_{t-1}^* Y_{t-1}$ is obtained by substituting $P_t$ with $P_{t-1}$ in the definition of $\hat g_t$. Note that with the convention $P_{n+1} = I$  the second term $\Omega(t)$ matches the definition of $\Omega(n+1)$  of~\eqref{eq:defZ} since $\smash{\tilde g_{n+1} = A_n^{-1} S^*_n Y_n = \hat h_{n+1}}$. In the rest of the first part we focus on upper-bounding the terms $Z(t)$. The approximation terms $\Omega(t)$ will be bounded in the next part.

Now, we remark that by expanding the square norm
\begin{multline}
	\label{eq:Lt}
	L_t(f) = \|Y_t\|^2 - 2Y_t^\top S_t f  + \big\|S_t f\big\|^2 + \lambda \|f\|^2  =   \|Y_t\|^2 - 2Y_t^\top S_t f  + \scal{f}{C_t f} + \lambda \|f\|^2 \\
	= \|Y_t\|^2 - 2Y_t^\top S_t f  + \scal{f}{A_t f} \,,
\end{multline}
where for the second equality, we used 
\[
\big\|S_t f\big\|^2  = \sum_{t=1}^n f(x_t)^2 = \sum_{t=1}^n \scal{f}{\phi(x_t)}^2 = \sum_{t=1}^n \scal{f}{\phi(x_t) \otimes \phi(x_t) f} = \scal{f}{C_tf}\,.
\]
Substituting $\hat g_{t+1}$ into~\eqref{eq:Lt} we get
\begin{equation}
\label{eq:Ltgt}
L_t(\hat g_{t+1})  = \|Y_t\|^2 - 2Y_t^\top S_t \hat g_{t+1} + \scal{\hat g_{t+1}}{A_t \hat g_{t+1}}\,.
\end{equation}
But, since $\hat g_{t+1} \in \tilde H_{t}$, we have $\hat g_{t+1} =  P_t \hat g_{t+1}$ which yields 
\[
Y_t^\top S_t \hat g_{t+1} = Y_t^\top S_t  P_t \hat g_{t+1} = Y_t^\top S_t  \tilde A_t^{-1} \tilde A_t P_t \hat g_{t+1} \,.
\]
Then, using that $\tilde A_t P_t = (P_t C_t P_t + \lambda I)P_t = P_t  A_t P_t$, we get
\[
 Y_t^\top S_t \hat g_{t+1} = \underbrace{Y_t^\top S_t P_t \tilde A_t^{-1} P_t}_{\hat g_{t+1}^\top}   A_t  \hat g_{t+1} =  \scal{\hat g_{t+1}}{A_t \hat g_{t+1}} \,.
\]
Thus, combining with~\eqref{eq:Ltgt} we get
\[
    L_t(\hat  g_{t+1})  = \|Y_t\|^2 - \scal{\hat g_{t+1}}{A_t \hat g_{t+1}} \,.
\]
Similarly, substituting $\tilde g_t$ into~\eqref{eq:Lt} and using $\tilde g_t \in \tilde \cH_t$, we can show
\[
    L_{t-1}(\tilde g_t) = \|Y_{t-1}\|^2 - \scal{\tilde  g_{t}}{A_{t-1} \tilde g_{t}} \,.
\]
Combining the last two equations implies
\begin{equation}
	L_{t-1}(\tilde g_{t}) - L_t(\hat g_{t+1}) = - y_t^2 + \scal{\hat g_{t+1}}{A_t \hat g_{t+1}} - \scal{\tilde g_{t}}{A_{t-1} \tilde g_{t}}\,.
	\label{eq:instantdiff2}
\end{equation}
Furthermore, using the definition of $\hat g_{t+1}$, we have
\[
    P_t A_t \hat g_{t+1} = P_t A_t P_t \tilde A_t^{-1} P_t S_t^*Y_t = P_t \tilde A_t \tilde A_{t}^{-1} P_t S_t^*Y_t = P_t S_t^* Y_t \,.
\]
The same calculation with $\tilde g_t$ yields
\begin{multline}
    P_t A_{t-1}\tilde g_t = P_t (C_{t-1} + \lambda I) P_t  (P_t C_{t-1}P_t + \lambda I)^{-1} S_{t-1}^* Y_{t-1}  \\
    = P_t (P_t C_{t-1}P_t + \lambda I)  (P_t C_{t-1}P_t + \lambda I)^{-1} S_{t-1}^* Y_{t-1} = P_t S_{t-1}^* Y_{t-1}\,.
    \label{eq:ptattildegt}
\end{multline}
Together with the previous equality, it entails
\begin{equation}
    P_t A_t \hat g_{t+1} - P_t A_{t-1}\tilde g_t  =   P_t(S_t^* Y_t - S_{t-1}^* Y_{t-1})  = y_t P_t \phi(x_t) \,. \label{eq:AtKxt2}
\end{equation}

Then, because $\hat f_t \in \tilde \cH_t$, we have 
 $\hat y_t = \hat f_t(x_t) = \scal{\hat f_t}{\phi(x_t)} = \scal{\hat f_t}{P_t \phi(x_t)}$. This yields
\begin{align}
	(y_t - \hat y_t)^2 
		& = y_t^2 - 2 y_t \hat y_t + \hat y_t^2 \nonumber \\
		& = y_t^2 - 2 \scal{\hat f_t}{y_t P_t \phi(x_t)} + \scal{\hat f_t}{ \phi(x_t)\otimes \phi(x_t) \hat f_t} \nonumber \\
		& \stackrel{\eqref{eq:AtKxt2}}{\leq} y_t^2 - 2\scal{\hat f_t}{P_t A_t \hat g_{t+1} - P_t A_{t-1} \tilde g_t} + \scal{\hat f_t}{ \phi(x_t)\otimes \phi(x_t) \hat f_t}   \nonumber \\
		& =  y_t^2 - 2\scal{\hat f_t}{A_t \hat g_{t+1} - A_{t-1} \tilde g_t} + \scal{\hat f_t}{ (A_t - A_{t-1}) \hat f_t}   \,,
		\label{eq:instantloss2}
\end{align}
where the last equality uses $f_t \in \tilde \cH_t$ and that $A_t - A_{t-1} = \phi(x_t) \otimes \phi(x_t)$. 

Putting equations~\eqref{eq:instantdiff2} and \eqref{eq:instantloss2} together,
we get 
\begin{eqnarray*}
    Z(t)
	& \stackrel{\eqref{eq:regret1}}{=} & (y_t - \hat y_t)^2  + L_{t-1}(\tilde g_{t}) - L_t(\hat g_{t+1})  \\
	&\stackrel{\eqref{eq:instantdiff2}+\eqref{eq:instantloss2}}{\leq} & \Big(\scal{\hat g_{t+1}}{A_t \hat g_{t+1}} - 2 \scal{\hat f_t}{A_t\hat g_{t+1}} + \scal{\hat f_t}{A_t\hat  f_{t}} \Big) \\
	& & \hspace*{1cm} - \Big(\scal{\tilde g_{t}}{A_{t-1} \hat g_{t}} - 2 \scal{P_{t-1} \hat f_t}{A_{t-1} \tilde g_{t}} + \scal{\hat f_t}{A_{t-1}\hat  f_{t}}  \Big)\, \\
	& = & 	\scal{\hat g_{t+1} - \hat f_t}{A_t(\hat g_{t+1} - \hat f_t)} - \underbrace{\scal{\hat f_t - \tilde g_t}{A_{t-1}(\hat f_t - \tilde g_t)}}_{\geq 0}  \\
	& \leq &  \scal{\hat g_{t+1} - \hat f_t}{\tilde A_t(\hat g_{t+1} - \hat f_t)} \\
	& \stackrel{\eqref{eq:f_def}+\eqref{eq:g_def}}{=} &  \scal{P_t \tilde A_t^{-1} P_t (S_t^* Y_t - S_{t-1}^* Y_{t-1})}{\tilde A_t P_t \tilde A_t^{-1} P_t (S_t^* Y_t - S_{t-1}^* Y_{t-1})} \\
	& =  &  y_t^2 \scal{P_t \tilde A_t^{-1} P_t \phi(x_t)}{\tilde A_t P_t \tilde A_t^{-1} P_t \phi(x_t)} \\
	& = & y_t^2 \scal{\tilde A_t^{-1} P_t \phi(x_t)}{P_t \phi(x_t)} \\
\end{eqnarray*}
where the last equality is because $P_t \tilde A_t = \tilde A_t P_t$ from the definition of $\tilde A_t := \tilde C_t + \lambda I$ with $\tilde C_t := P_t C_t P_t$.

Therefore, plugging back into~\eqref{eq:regret1}, we have
\begin{equation}
    Z_1 \leq \sum_{t=1}^n y_t^2 \scal{\tilde A_t^{-1} P_t \phi(x_t)}{P_t \phi(x_t)}  + \Omega(t) \,,
    \label{eq:part1}
\end{equation}
where we recall that $\Omega(t) := L_{t-1}(\hat g_t) - L_{t-1}(\tilde g_t)$.

\textbf{Part 2. Upper-bound of the approximation terms $\Omega(t)$.}
We recall that we use the convention $P_{n+1} = I$ which does not change the algorithm. Let $t \geq 1$, expending the square losses we get
\begin{align*}
    \Omega(t+1) &= \sum\limits_{s=1}^t \left[ (\hat g_{t+1}(x_s) - y_s)^2 - (\tilde g_{t+1}(x_s) - y_s)^2  + \lambda \|\hat g_{t+1}\|^2 - \lambda \|\tilde g_{t+1}\|^2  \right]\\
    &= \sum\limits_{s=1}^t \big[ \cancel{y_s^2} - 2 \scal{\hat g_{t+1}}{y_s \phi(x_s)} + \scal{\hat g_{t+1}}{\phi(x_s) \otimes \phi(x_s) \hat g_{t+1}} \\
    &\hspace*{1cm}  - \cancel{y_s^2} + 2 \scal{\tilde g_{t+1}}{y_s \phi(x_s)} - \scal{\tilde g_{t+1}}{\phi(x_s) \otimes \phi(x_s) \tilde g_{t+1}} + \lambda \|\hat g_{t+1}\|^2 - \lambda \|\tilde g_{t+1}\|^2 \big]  \\
    &= 2 \scal{\tilde g_{t+1} - \hat g_{t+1}}{S_t^* Y_t} + \scal{\hat g_{t+1}}{A_t \hat g_{t+1}} - \scal{\tilde g_{t+1}}{A_t \tilde g_{t+1}}  
\end{align*}
Since both $\tilde g_{t+1}$ and $\hat g_{t+1}$ belong to $\tilde \cH_{t+1}$, we have
\[
    \Omega(t+1) = 2 \scal{\tilde g_{t+1} - \hat g_{t+1}}{P_{t+1} S_t^* Y_t} + \scal{\hat g_{t+1}}{A_t \hat g_{t+1}} - \scal{\tilde g_{t+1}}{A_t \tilde g_{t+1}} \,,
\]
which using that $P_{t+1} S_t^* Y_t = P_{t+1} A_t \tilde g_{t+1}$ by Equality~\eqref{eq:ptattildegt} yields
\begin{align*}
    \Omega(t+1) & = 2 \scal{\tilde g_{t+1} - \hat g_{t+1}}{P_{t+1} A_t \tilde g_{t+1}} + \scal{\hat g_{t+1}}{A_t \hat g_{t+1}} - \scal{\tilde g_{t+1}}{A_t \tilde g_{t+1}} \\ 
    & = 2 \scal{\tilde g_{t+1} - \hat g_{t+1}}{A_t \tilde g_{t+1}} + \scal{\hat g_{t+1}}{A_t \hat g_{t+1}} - \scal{\tilde g_{t+1}}{A_t \tilde g_{t+1}} \\ 
    & = -2\scal{\hat g_{t+1}}{A_t \tilde g_{t+1}} + \scal{\hat g_{t+1}}{A_t \hat g_{t+1}} +  \scal{\tilde g_{t+1}}{A_t \tilde g_{t+1}}  \\
    &= \scal{\tilde g_{t+1} - \hat g_{t+1}}{A_t (\tilde g_{t+1} - \hat g_{t+1})} \,.
\end{align*}
Let us denote $B_t = P_{t+1} A_t P_{t+1}$. Then, remarking that $\hat g_{t+1} = P_t \tilde A_t^{-1} P_t A_t \tilde g_{t+1}$ and  that $(P_{t+1} - P_t \tilde A_t^{-1} P_t A_t) P_t = 0$, we have
\begin{align}
    \Omega(t+1) &= \scal{(P_{t+1} - P_t \tilde A_t^{-1} P_t A_t) \tilde g_{t+1}}{A_t (P_{t+1} - P_t \tilde A_t^{-1} P_t A_t) \tilde g_{t+1}} \nonumber \\
    & = \scal{(P_{t+1} - P_t \tilde A_t^{-1} P_t B_t) \tilde g_{t+1}}{B_t (P_{t+1} - P_t \tilde A_t^{-1} P_t B_t) \tilde g_{t+1}} \nonumber \\
    &= \big\| B_t^{1/2} (P_{t+1} - P_t \tilde A_t^{-1} P_t B_t) \tilde g_{t+1} \big\|^2 \nonumber \\
    &= \big\| B_t^{1/2} (P_{t+1} - P_t \tilde A_t^{-1} P_t B_t) (P_{t+1} - P_t) \tilde g_{t+1} \big\|^2 \nonumber \\
    &\leq \big\| B_t^{1/2} (P_{t+1} - P_t \tilde A_t^{-1} P_t B_t)\|^2 \|(P_{t+1} - P_t) \tilde g_{t+1}\big\|^2  \,. 
    \label{eq:omega1}
\end{align}
We now upper-bound the two terms of the right-hand-side. For the first one, we use that  
\begin{align}
 \Big\| P_{t+1} & -  B_t^{1/2}  P_t \tilde A_t^{-1} P_t B_t^{1/2}  \Big\|^2  \nonumber \\
 & = \Big\|P_{t+1} - 2B_t^{1/2}  P_t \tilde A_t^{-1} P_t B_t^{1/2} +  B_t^{1/2}  P_t \tilde A_t^{-1} \underbrace{P_t B_t^{1/2}B_t^{1/2}  P_t \tilde A_t^{-1}}_{P_t} P_t B_t^{1/2} \Big\| \nonumber \\
 & = \left\| P_{t+1} -  B_t^{1/2}  P_t \tilde A_t^{-1} P_t B_t^{1/2} \right\|^2 \in \{0,1\} \label{eq:bt}\,,
\end{align}
where in the second equality we used that $P_t B_t^{1/2}B_t^{1/2}  P_t \tilde A_t^{-1} = P_t B_t  P_t \tilde A_t^{-1} = P_t \tilde A_t \tilde A_t^{-1} = P_t$.
Therefore, using that $B_t^{1/2}P_{t+1} = P_{t+1}B_t^{1/2}$ we get
\begin{align*}
    \| B_t^{1/2} (P_{t+1} - P_t \tilde A_t^{-1} P_t B_t) \|^2
    &=  \left\| B_t^{1/2} \left[ (P_{t+1} - P_t \tilde A_t^{-1} P_t B_t) (P_{t+1} - P_t) \right] \right\|^2 \\
    &= \left\| (P_{t+1}  - B_t^{1/2} s P_t \tilde A_t^{-1} P_t B_t^{1/2}) B_t^{1/2} (P_{t+1} - P_t) \right\|^2 \\
    &\leq \left\|  P_{t+1} -  B_t^{1/2}  P_t \tilde A_t^{-1} P_t B_t^{1/2} \right\|^2 \left\| B_t^{1/2} (P_{t+1} - P_t) \right\|^2 \\
    &\stackrel{\eqref{eq:bt}}{\leq} \left\| B_t^{1/2} (P_{t+1} - P_t) \right\|^2\\
    &\le \left\| C_t^{1/2} (P_{t+1} - P_t) \right\|^2 + \lambda \\
    &\le \mu_t + \lambda \,,
\end{align*}
where $\mu_t := \big\|(P_{t+1} - P_t)C_t^{1/2}\big\|^2$. Plugging back into~\eqref{eq:omega1}, this yields
\begin{equation} \label{eq:omega2}
    \Omega(t+1) \leq (\mu_t + \lambda) \|(P_{t+1} - P_t) \tilde g_{t+1}\|^2 \,.
\end{equation}
Then, substituting $\tilde g_{t+1}$ with its definition and using $\|Y_t\|^2 \leq tB^2$, we get
\begin{eqnarray*}
    \|(P_{t+1} - P_t) \tilde g_{t+1}\|^2
    & = &  \|(P_{t+1} - P_t) A_t^{-1} S_t^* Y_t \|^2 \\
    & \stackrel{\text{(Cauchy-Schwarz)}}{\le} &  \|(P_{t+1} - P_t) A_t^{-1} S_t^*\|^2 \|Y_t\|^2 \\
    & \le & t B^2 \|(P_{t+1} - P_t) A_t^{-1} S_t^* S_t A_t^{-1} (P_{t+1} - P_t)\| \\
    & \stackrel{(C_t = S_t^*S_t)}{=} & t B^2 \|(P_{t+1} - P_t) A_t^{-1} C_t A_t^{-1} (P_{t+1} - P_t)\|\,.
\end{eqnarray*}
Because $C_t$ and $A_t = C_t + \lambda I$ are co-diagonalizable, we have 
\begin{equation*}
    C_t^{1/2} A_t^{-1} = A_t^{-1} C_t^{1/2} \,,
\end{equation*}
which, together with $\|A_t^{-2}\| \leq 1/\lambda^2$ leads to 
\begin{eqnarray*}
 \|(P_{t+1} - P_t) \tilde g_{t+1}\|^2   & \leq  &  t B^2 \|(P_{t+1} - P_t) C_t^{1/2} A_t^{-2} C_t^{1/2} (P_{t+1} - P_t)\| \\
    &\le & \frac{t B^2}{\lambda^2} \|(P_{t+1} - P_t) C_t^{1/2} \|^2 \\
    &= & \frac{t \mu_t B^2}{\lambda^2} \,.
\end{eqnarray*}
Therefore, Inequality~\eqref{eq:omega2} concludes the proof of the second part
\begin{equation} \label{eq:part2}
    \Omega(t+1) \le (\mu_t + \lambda) \frac{t\mu_t B^2}{\lambda^2} \,.
\end{equation}

\paragraph{Conclusion of the proof.} Combining~\eqref{eq:defZ},~\eqref{eq:part1}, and~\eqref{eq:part2}, we obtain
\begin{eqnarray*}
    R_n(f) & \leq &  \sum_{t=1}^n y_t^2 \scal{\tilde A_t^{-1} P_t \phi(x_t)}{P_t \phi(x_t)}  + \sum_{t=1}^{n+1} \Omega(t) \\
        & \leq & \sum_{t=1}^n y_t^2 \scal{\tilde A_t^{-1} P_t \phi(x_t)}{P_t \phi(x_t)}  + \sum_{t=1}^{n+1} (\mu_{t-1} + \lambda) \frac{(t-1)\mu_{t-1} B^2}{\lambda^2} \,,
\end{eqnarray*}
which concludes the proof of the theorem.
\end{proof}

\section{Proofs of Section~\ref{sec:background} (\kawv{})}
\label{app:proofsbackground}

\subsection{Proof of Proposition~\ref{prop:nonLinearRidge}}
\label{app:kernelnonlinearRidge}

\medskip
First, remark that \kawv{} corresponds to \pkawv{} with $\tilde \cH_t = \cH$ and thus $P_t = I$ for all $t \geq 1$. Therefore, applying Theorem \ref{thm:mainLemma} with $P_t = I$ yields the regret bound,
\[ 
    R_n(f) \leq \lambda \|f\|^2 + \sum_{t=1}^n \scalh{A_t^{-1}\phi(x_t)}{\phi(x_t)} \,,
\]
for all $f \in \cH$. The rest of the proof consists in upper-bounding the second term in the right hand side. Remarking that $A_t = A_{t-1} + \phi(x_t) \otimes \phi(x_t)$ and applying Lemma~\ref{lem:linearalgebra} stated below we have
\[
    \scalh{A_t^{-1}\phi(x_t)}{\phi(x_t)} = 1 - \frac{\det(A_{t-1}/\lambda)}{\det(A_t/\lambda)} \,.
\]
It is worth pointing out that $\det(A_t/\lambda)$ is well defined since $A_t = I + C_t$ with  $C_t = \sum_{s=1}^t \phi(x_s)\otimes \phi(x_s)$ at most of rank $t\geq 0$. 
Then we use $1-u\leq \log (1/u)$ for $u>0$ which yields
\[
    \scalh{A_t^{-1}\phi(x_t)}{\phi(x_t)} \leq \log \frac{\det(A_{t}/\lambda)}{\det(A_{t-1}/\lambda)} \,.
\]
Summing over $t=1,\dots,n$, using $A_0 = \lambda I$ and $A_n = \lambda I + C_n$ we get
\begin{align*}
     \sum_{t=1}^n \scalh{A_t^{-1}\phi(x_t)}{\phi(x_t)} 
        & \leq \log \left(\det\Big(I + \frac{C_n}{\lambda}\Big)\right) \\
        & = \sum_{k=1}^\infty \log\left(1+\frac{\lambda_k(C_n)}{\lambda}\right) \,,
\end{align*}
which concludes the proof.

The following Lemma is a standard result of online matrix theory (see~Lemma~11.11 of \cite{Cesa-Bianchi2006}).

\begin{lemma}
    \label{lem:linearalgebra}
    Let $V:\cH \to \cH$ be a linear operator. Let $u \in \cH$ and let $U = V - u \otimes u$. Then,
    \[
        \scalh{V^{-1} u}{u} = 1 - \frac{\det(U)}{\det(V)} \,.
    \]
\end{lemma}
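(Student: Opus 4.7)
The identity is the classical matrix determinant / Sherman--Morrison lemma, and in the paper's application $V = A_t = \lambda I + C_t$ is positive definite (so $V^{1/2}$ is well defined and $\det(V) := \det(V/\lambda)$ makes sense as a finite product because $C_t$ has finite rank). My plan is to symmetrize the rank-one perturbation and reduce everything to the elementary identity $\det(I - w\otimes w) = 1 - \|w\|^2$.

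First I would set $w := V^{-1/2} u \in \cH$ and observe the operator identity
\[
V^{-1/2}\,U\,V^{-1/2} \;=\; V^{-1/2}\bigl(V - u\otimes u\bigr)V^{-1/2} \;=\; I - w\otimes w,
\]
using $V^{-1/2}(u\otimes u)V^{-1/2} = (V^{-1/2}u)\otimes (V^{-1/2}u)$. Taking determinants on both sides and using multiplicativity, the right-hand side is $1 - \|w\|^2$ (the operator $I - w\otimes w$ has all eigenvalues $1$ except one eigenvalue equal to $1-\|w\|^2$ on the line spanned by $w$), while the left-hand side is $\det(U)/\det(V)$. Since
\[
\|w\|^2 \;=\; \scalh{V^{-1/2} u}{V^{-1/2} u} \;=\; \scalh{V^{-1} u}{u},
\]
this gives $\det(U)/\det(V) = 1 - \scalh{V^{-1}u}{u}$, which rearranges to the claim.

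An alternative I would keep in reserve is the two-step route via Sherman--Morrison and the matrix determinant lemma: $\det(V) = \det(U)\bigl(1 + \scalh{U^{-1}u}{u}\bigr)$ and $\scalh{V^{-1}u}{u} = \scalh{U^{-1}u}{u}/(1+\scalh{U^{-1}u}{u})$, which combine into the same identity. This route avoids taking $V^{1/2}$ and is the natural choice if one only assumes $V$ and $U$ are invertible rather than positive definite.

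The only subtlety I expect is making sense of $\det(\cdot)$ when $\cH$ is infinite dimensional. This is handled by noting that in every use of the lemma in the paper, $V$ and $U$ differ from $\lambda I$ by a finite-rank operator, so one can work with $\det(V/\lambda)$ and $\det(U/\lambda)$ as finite products of eigenvalues (equivalently, as Fredholm determinants of $I + C/\lambda$ with $C$ finite-rank); the symmetrization step and the identity $\det(I - w\otimes w) = 1 - \|w\|^2$ carry over verbatim on the finite-dimensional subspace $\mathrm{span}\{w, \ran U\}$ on which everything of interest happens.
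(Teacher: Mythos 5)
Your proof is correct. Note that the paper does not actually prove this lemma: it cites it as a standard result (Lemma~11.11 of Cesa-Bianchi and Lugosi, 2006), and your symmetrization argument --- conjugating by $V^{-1/2}$ and using $\det(I - w\otimes w) = 1 - \nor{w}^2$ --- is precisely the classical proof of that cited result. Your closing remark on the infinite-dimensional determinant also matches how the paper uses the lemma, since there $V = A_t = \lambda I + C_t$ with $C_t$ of finite rank and the paper explicitly works with $\det(A_t/\lambda)$, which is well defined as a determinant of an identity-plus-finite-rank operator.
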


\subsection{Proof of \texorpdfstring{Proposition~\ref{prop:upperbounddeff}}{Proposition 2.2}}
Using that for $x >0$
\[
    \log(1+x) \leq \frac{x}{x+1} (1+\log(1+x)) \,,
\]
and denoting by $a(\la)$ the quantity
$a(s,\la) := 1 + \log (1+ s/\la)$,
we get for any $n\geq 1$
\[
    \log \! \Big(1 + \frac{\lambda_k(\Kn)}{\lambda}\Big) \leq \frac{\lambda_k(\Kn)}{\lambda + \lambda_k(\Kn)} a(\la_k(\Kn),\la).
\]
Therefore, summing over $k\geq 1$ and denoting by $\la_1$ the largest eigenvalue of $\Kn$
\begin{align}
    \sum_{k=1}^n \log\Big(1 + \frac{\lambda_k(\Kn)}{\lambda}\Big) 
        & \leq a(\la_1,\la) \sum_{k=1}^n  \frac{\lambda_k(\Kn)}{\lambda + \lambda_k(\Kn)} \label{eq:majDeff}\\
        & = a(\la_1,\la) \tr\big(\Kn(\Kn + \lambda I)^{-1}\big) \nonumber \\
        & = a(\la_1,\la)\deff(\lambda) \nonumber 
\end{align}
where the last equality is from the definition of $\deff(\lambda)$. 
Combining with Proposition~\ref{prop:nonLinearRidge}, substituting $a$ and upper-bounding $$\lambda_1(\Kn) \leq \tr(\Kn) = \sum_{t=1}^n \|\phi(x_t)\|^2 \leq  n \kappa^2$$  concludes the proof.

\section{Proofs of Section~\ref{sec:taylor} (\pkawv{} with Taylor's expansion)}
\label{app:proofTaylor}

\subsection{Proof of Theorem~\ref{thm:pkawv-appr-error}}
\label{sec:proofthm_pkawv-appr-error}

Applying Theorem~\ref{thm:mainLemma} with a fix projection $P$ and following the lines of the proof of Proposition \ref{prop:nonLinearRidge} we get
\[ 
    R_n(f) \leq \lambda \|f\|^2 + B^2 \sum\limits_{j=1}^n \log \left( 1 + \frac{\lambda_j(P C_n P)}{\lambda} \right) + (\mu + \lambda) \frac{n\mu B^2}{\lambda^2}  \,,
\]
where $\mu = \|(I - P) C_n^{1/2} \|^2$.
Moreover we have for all $i =1,\dots,n$ using that $\tilde C_n = PC_nP = PS_nS_n^*P$, we have
\[
\la_i(\tCn) = \la_i(P \Cn P) = \la_i(P \Sn^* \Sn P)  = \la_i(\Sn P P \Sn^* ) = \la_i(\Sn P \Sn^* )  \leq \la_i(K_{nn}).
\]

\subsection{Proof of Theorem~\ref{thm:phi-awv-gaussian}}
To apply our \cref{thm:pkawv-appr-error}, we need first (1) to recall that the functions $g_k$, $k \in \N_0^d$ are in $\hh$, (2) to show that they can approximate perfectly the kernel and (3) to quantify the approximation error of $G_M$ for the kernel function. First we recall some important existing results about the considered set of functions. For completeness, we provide self-contained (and often shorter and simplified) proofs of the following lemmas in Appendix~\ref{app:additional_proofs}.  

The next lemma states that $g_k$ with $k \in \N$ is an orthonormal basis for $\hh$ induced by the Gaussian kernel.
\blm[\cite{steinwart2006explicit}] \label{lm:orthonormal-basis}
For any $k, k' \in \N_0^d$, 
$$g_k \in \hh, \quad \|g_k\|_\hh = 1, \quad \scal{g_k}{g_{k'}}_\hh = \mathds{1}_{k = k'}\,.$$
\elm
Note that byproduct of the lemma, we have that $G_M \subset \hh$ and moreover that the matrix $Q$ is the identity, indeed $\smash{Q_{ij} = \scal{g_{k_i}}{g_{k_j}}_\hh = \mathds{1}_{k_i = k_j}}$. This means that the functions in $G_M$ are linearly independent. Moreover the fact that $Q = I_r$ further simplifies the computation of the embedding $\tphi$ (see~\eqref{eq:tphi}) in the implementation of the algorithm. 

The next lemma recalls the expansion of $k(x,x')$ in terms of the given basis.
\blm[\cite{cotter2011explicit}] \label{lem:kern-taylor-exp}
For any $x \in \X$, 
\eqal{\label{eq:kern-taylor-exp}
{k(x,x') = \scal{\phi(x)}{\phi(x')}_\hh = \sum_{k \in \N_0^d} g_k(x) g_k(x').}
}
\elm
Finally, next lemma provides approximation error of $k(x,x')$ in terms of the set of functions in $G_M$, when the data is contained in a ball or radius $R$. 
\blm[\cite{cotter2011explicit}]\label{lm:cotter}
Let $R > 0$. For any $x,x' \in \R^d$ such that $\|x\|, \|x'\| \leq R$ we have
\eqal{
{\Big|k(x,x') - \sum_{g \in G_M} g(x) g(x')\Big| \leq \frac{(R/\sigma)^{2M+2}}{(M+1)!}.}
}
\elm

Now we are ready to prove \cref{thm:phi-awv-gaussian}. 

\noindent{\bf Proof of point 1.} First, note that $r := |G_M|$, the cardinality of $G_M$, corresponds to the number of monomials of the polynomial $(1+x_1+\dots+x_d)^d$, i.e. $r := |G_M| =\binom{M + d}{M}$.
By recalling that $\binom{n}{k}\leq (en/k)^k$ for any $n, k \in \N$, we have
$$r = \binom{M + d}{M} = \binom{M + d}{d} \leq e^d(1 + M/d)^d.$$
We conclude the proof of the first point of the theorem, by considering that \pkawv{} used with the set of functions $G_M$ consists in running the online linear regression algorithm of~\cite{Vovk01, AzouryWarmuth2001} with $r := |G_M|$ features (see Appendix~\ref{app:implementation} for details). It
incurs thus a computational cost of $O(nr^2 + nrd)$ in time (no $r^3$ since we don't need to invert $Q$ which we have proven to be the identity matrix as consequence of \cref{lm:orthonormal-basis}) and $O(r^2)$ in memory.

\noindent{\bf Proof of point 2.} By \cref{lm:orthonormal-basis} we have that $G_M \subset \hh$ and $Q = I_r$, so the functions in $G_M$ are linearly independent. Then we can apply \cref{thm:pkawv-appr-error} obtaining the regret bound in \cref{eq:regret-appr-pawv}:
\begin{equation}
        R_n(f)  \leq \lambda \nor{f}^2 + B^2 \sum_{j=1}^n  \log\left(1+ \frac{\la_j(\Kn)}{\la} \right) 
        + B^2 \frac{(\mu + \lambda)n}{\lambda^2} \mu \,,
        \label{eq:regret-appr-pawv2}
\end{equation}
where $\mu := \big\|(I-P) C_n^{1/2}\big\|^2$ and $C_n := \sum_{t=1}^n \phi(x_t) \otimes \phi(x_t)$. The proof consists in upper-bounding the last approximation term $B^2 \frac{(\mu + \lambda)n}{\lambda^2} \mu$. We start by upper-bounding $\mu$ as follows
\begin{align*}
    \mu  :=  \big\|(I-P) C_n^{1/2}\big\|^2 
        & = \big\|(I-P) C_n (I-P)\big\| \\
        & = \left\|(I-P) \sum_{t=1}^n \phi(x_t) \otimes \phi(x_t) (I-P) \right\| \\
        & \leq \sum_{t=1}^n \left\|(I-P)  \phi(x_t) \otimes \phi(x_t) (I-P) \right\| \\
        & = \sum_{t=1}^n \left\|(I-P)\phi(x_t)\right\|^2 \\
        & = \sum_{t=1}^n \scal{(I-P)\phi(x_t)}{\phi(x_t)} \\
        & = \sum_{t=1}^n \scal{\phi(x_t)}{\phi(x_t)} - \scal{P\phi(x_t)}{P\phi(x_t)}  \\
        & = \sum_{t=1}^n k(x_t,x_t) - \|P \phi(x_t)\|^2 \,,
\end{align*}
where we used that $\scal{P \phi(x_t)}{\phi(x_t)} = \scal{P\phi(x_t)}{P\phi(x_t)}$. Now, since by Lemma~\ref{lm:orthonormal-basis}, the $g_k$ form an orthonormal basis of $\cH$, we have that
\[
    \|P \phi(x_t)\|^2 = \sum_{g \in G_M} g(x_t)^2 \,.
\]
where we recall that $P$ the projection onto $G_M$. Therefore, by Lemma~\ref{lm:cotter},
\begin{equation}
    \label{eq:upper-boundmu}
    \mu \leq \frac{(R/\sigma)^{2M+2} n }{(M+1)!} \quad  \stackrel{\text{Stirling}}{\leq} \quad   \frac{n e^{-(M+1) \log\big(  \frac{(M+1)\sigma^2}{eR^2}\big)}}{\sqrt{2\pi(M+1)}} 
    \leq \frac{n e^{-(M+1)}}{\sqrt{2\pi(M+1)}} \stackrel{M\geq 1}{\leq} \frac{n}{9}e^{-M} \,,
\end{equation}
where we used the fact that $n!$ is lower bounded by the Stirling approximation as $n! \geq \sqrt{2\pi n}e^{n\log\frac{n}{e}}$, for $n \in \N_0$ and that $M+1 \geq e^2 R^2/\sigma^2$, so $\log\frac{M+1}{e R^2/\sigma^2} \geq 1$. Now, since $M \geq 2\log(n/(\lambda \wedge 1))$, we have $M \geq \log(n/\lambda)$  and thus
\[
    \mu \leq \frac{n}{9}e^{-M} \leq \frac{\lambda}{9} \leq \lambda. 
\]
Therefore, the approximation term in~\eqref{eq:regret-appr-pawv2} is upper-bounded as
\[
   B^2 \frac{(\mu+\lambda)}{\lambda^2} \mu n \leq \frac{2B^2 \mu n}{\lambda} \stackrel{\eqref{eq:upper-boundmu}}{\leq} \frac{2B^2n^2 e^{-M}}{9\lambda}
\]
which using again $M \geq 2 \log (n/(\lambda \wedge 1))$ entails
\begin{equation}
    \label{eq:approxGaussian}
    B^2 \frac{(\mu+\lambda)}{\lambda^2} \mu n  \leq 
    \frac{2}{9} B^2 (\lambda \wedge \lambda^{-1}) \leq \frac{4B^2}{9} \log\Big(1+\frac{1}{\lambda}\Big) \,,
\end{equation}
where in the last inequality we used that $(\la \wedge \la^{-1})/2 \leq \log(1+ 1/\la)$ for any $\la > 0$.
Now, since $\log(1+x)$ is concave on $[0,\infty)$, by subadditivity 
$$ \textstyle{\sum_{j=1}^n \log\Big(1+\frac{\la_j(\Kn)}{\la}\Big) \geq  \log\Big(1+\sum_{j=1}^n\frac{\la_j(\Kn)}{\la}\Big).}$$
By definition of trace in terms of eigenvalues and of the diagonal of $\Kn$, we have 
$$\sum_{j=1}^n \la_j(\Kn) = \tr(\Kn) = \sum_{j=1}^n k(x_j, x_j) = n,$$ 
where the last step is due to the fact that for the Gaussian kernel we have $k(x,x) = 1$, for any $x \in \X$. Then
\eqal{
B^2\log\left(1 + \frac{1}{\la}\right) \leq B^2\log\left(1 + \frac{n}{\la}\right)
\leq {B^2 \sum_{j=1}^n \log\left(1 + \frac{\la_j(\Kn)}{\la}\right).}
}
Plugging back into Inequality~\eqref{eq:approxGaussian} and substituting into~\eqref{eq:regret-appr-pawv2} concludes the proof of the Theorem.

\section{Proofs of Section~\ref{sec:nystrom} (\pkawv{} with Nyström projections)}
\label{app:proof_Nystrom}

\subsection{Proof of Theorem~\ref{thm:pkawv-nystrom}}
\label{sec:proof_pkawv-nystrom}
The proof consists of a straightforward combination of Proposition~\ref{thm:KORS} and  Theorem~\ref{thm:mainLemma}. According to Proposition~\ref{thm:KORS}, with  probability at least $1-\delta$, we have for all $t\geq 1$, 
\[
    \mu_t = \|(P_{t+1} - P_t)C_t^{1/2}\|^2 \leq \|(I - P_t)C_t^{1/2}\|^2 \indic_{P_{t+1} \neq P_t} \leq \mu \indic_{P_{t+1} \neq P_t}\,,
\]
with $|\cI_n| \leq 9\deff(\mu) \log(2n/\delta)^2$. 
Therefore, from Theorem~\ref{thm:mainLemma}, if $\mu \leq \lambda$, the regret is upper-bounded as
\[ 
    R_n(f) \leq \lambda \|f\|^2 + B^2 \deff(\lambda) \log \left( e + \frac{en \kappa^2}{\lambda} \right) + 2\frac{\mu n (|\cI_n| + 1) B^2}{\lambda}  \,.
\]
Furthermore, similarly to any online linear regression algorithm in a $m$-dimensional space, the efficient implementation of the algorithm (see Appendix~\ref{app:implementation}) requires  $O(m^2)$ space and time per iteration, where $m = |\cI_n|$ is the size of the dictionary. This concludes the proof of the theorem.  

\subsection{Proof of Corollary~\ref{cor:krr_nystrom_rate}}
We recall that the notation $\lesssim$ denotes a rough inequality which is up to logarithmic multiplicative terms and may depend on unexplained constants. Here, we only consider non-constant quantities $n$, $\lambda$, $m$ and $\mu$ and focus on the polynomial dependence on $n$. Keeping this in mind, the high-probability regret upper-bound provided by Theorem~\ref{thm:pkawv-nystrom} can be rewritten as
\begin{equation}
    R_n(f) \lesssim \lambda + \left(\frac{n}{\lambda}\right)^\gamma + \frac{\mu n |\cI_n|}{\lambda} \,,
    \label{eq:regretratenystrom}
\end{equation}
for all $f \in \cH$. It only remains to optimize the parameters $\mu$ and $\lambda$. Choosing $\mu = \deff^{-1}(m)$ ensures that the size of the dictionary is upper-bounded as $|\cI_n| \lesssim \deff(\mu) = m$. 

Moreover, by assumption $m  = \deff(\mu) \leq \left( \frac{n}{\mu} \right)^\gamma$ and thus $\mu \leq nm^{-\frac{1}{\gamma}}$. Therefore, the regret is upper-bounded with high-probability as
\begin{equation}
    R_n(f) \lesssim \lambda + \left(\frac{n}{\lambda}\right)^\gamma + \frac{n^2 (m^{\frac{\gamma - 1}{\gamma}}+1)}{\lambda} \,.
    \label{eq:regretrate}
\end{equation}
Now, according to the value of $m$, two regimes are possible:
\begin{itemize}
    \item If the dictionary is large enough, i.e., $m \ge n^{\frac{2\gamma}{1-\gamma^2}}$ then, once $\lambda$ is optimized, the last term of the right-hand side is negligible. The regret upper-bound consists then in optimizing $\lambda + (n/\lambda)^\gamma$ in $\lambda$ yielding to the choice $\lambda = n^\frac{\gamma}{1+\gamma}$. We get the upper-bound
    \[
    R_n(f) \lesssim n^{\frac{\gamma}{\gamma + 1}} + n^\gamma n^{-\frac{\gamma^2}{1+\gamma}} + n^2 n^{-\frac{\gamma}{\gamma+1}} n^{\frac{-2}{\gamma+1}} 
    \lesssim n^{\frac{\gamma}{\gamma + 1}} \,,
    \]
    which recovers the optimal rate in this case.
    \item Otherwise, if $m \le n^{\frac{2\gamma}{1-\gamma^2}}$, then the last term of the r.h.s. of~\eqref{eq:regretrate} is predominant. The dictionary is too small to recover the optimal regret bound. The parameter $\lambda$ is optimizes the trade-off $\lambda + n^2 m^{(\gamma-1)/\gamma}/\lambda$ which leads to the choice $\lambda = nm^{\frac{1}{2} -\frac{1}{2\gamma}}$. The upper-bound on the regret is then
    \[
    R_n(f) \lesssim nm^{\frac{\gamma-1}{2\gamma}} + m^{\frac{1-\gamma}{2}} + n m^{\frac{1-\gamma}{2\gamma} + \frac{\gamma-1}{\gamma}} \lesssim nm^{\frac{\gamma-1}{2\gamma}} \,.
    \]
\end{itemize} 
This concludes the proof.

\subsection{Proof of Corollary \ref{cor:krr_nystrom_known_features_rate}}
The proof follows the lines of the one of Theorem~\ref{thm:pkawv-nystrom} and Corollary~\ref{cor:krr_nystrom_rate}. However, here since the projections are fixed we can apply Theorem~\ref{thm:pkawv-appr-error} instead of Theorem~\ref{thm:mainLemma}. This yields the high-probability regret upper-bound 
\[ 
    R_n(f) \lesssim \lambda + \left(\frac{n}{\lambda}\right)^\gamma + \frac{\mu n}{\lambda} \,,
\]
which improves by a factor $|\cI_n|$ the last term of the bound~\eqref{eq:regretratenystrom}. The choice $\mu = \deff^{-1}(m)$ yields with high probability $|\cI_n| \lesssim \deff(\mu) = m$ and $\mu \leq nm^{-\frac{1}{\gamma}}$ which entails
\[ 
    R_n(f) \lesssim \lambda + \left(\frac{n}{\lambda}\right)^\gamma + \frac{n^2 m^{-\frac{1}{\gamma}}}{\lambda} \,.
\]

Similarly to Corollary~\ref{cor:krr_nystrom_rate} two regimes are possible. The size of the dictionary decides which term is preponderant in the above upper-bound:
\begin{itemize}
    \item If $m \ge n^{\frac{2\gamma}{1+\gamma}}$ the dictionary is large enough to recover the optimal rate for the choice $\lambda = n^{\frac{\gamma}{1+\gamma}}$. Indeed it yields
    \[
    R_n(f) \lesssim n^{\frac{\gamma}{\gamma + 1}} + n^\gamma n^{-\frac{\gamma^2}{1+\gamma}} + n^{\frac{2\gamma}{\gamma+1}} n^{-\frac{\gamma}{\gamma+1}} \lesssim n^{\frac{\gamma}{\gamma + 1}}
    \]
    \item Otherwise $m \leq n^{\frac{2\gamma}{1+\gamma}}$ and the choice  $\lambda = n^{\frac{\gamma}{1+\gamma}}$ leads to 
    \[
    R_n(f) \lesssim
    n^{\frac{\gamma}{\gamma + 1}} + n^\gamma n^{-\frac{\gamma^2}{1+\gamma}} + n^{\frac{2\gamma}{\gamma+1}} n^{-\frac{\gamma}{\gamma+1}} 
    \lesssim n^{\frac{\gamma}{\gamma + 1}} \,.
    \]
    The last inequality is due to $m^{\frac{1}{2}} \le nm^{-\frac{1}{\gamma} + \frac{1}{2\gamma}}$ because $\gamma \le 1$ and $m \le n$.
\end{itemize}

\section{Proofs of additional lemmas}
\label{app:additional_proofs}
\subsection{Proof of \texorpdfstring{\cref{lm:orthonormal-basis}}{Lemma 4.2}}
Recall the following characterization of scalar product for translation invariant kernels (i.e. $k(x,x') = v(x-x')$ for a $v:\R^d \to \R$) \citep[see][]{berlinet2011reproducing}
$$\scal{f}{g}_\hh = \int \frac{{\cal F}[f](\omega){\cal F}[g](\omega)}{{\cal F}[v](\omega)},$$
where ${\cal F}[f]$ is the unitary Fourier transform of $f$. 
Let start from the one dimensional case and denote by $\hh_0$ the Gaussian RKHS on $\R$. First note that when $d=1$, we have $g_k = \psi_k$. 
Now, the Fourier transform of $\psi_k$ is ${\cal F}[\psi_k](\omega) = \frac{1}{\sqrt{k!}} H_k(x/\sigma^2) e^{-\omega^2/(2\sigma^2)}$, for any $k \in \N_0^d$, where $H_k(x)$ is the $k$-th Hermite polynomial \citep[see][Eq.~18.17.35 pag. 457]{olver2010nist}, and ${\cal F}[v] = e^{-\omega^2/2}$, then, by the fact that Hermite are orthogonal polynomial with respect to $e^{-\omega^2/2}$ forming a complete basis, we have
$$\scal{\psi_k}{\psi_{k'}}_{\hh_0} = \frac{1}{k!}\int H_k(\omega) H_{k'}(\omega) e^{-\omega^2/2} d\omega = \mathds{1}_{k = k'}.$$
The multidimensional case is straightforward since Gaussian is a product kernel, i.e.
$k(x,x') = \prod_{i=1}^d k(x^{(i)},x^{(i)})$ and $\hh = \otimes_{i=1}^d \hh_0$, so $\scal{\otimes_{i=1}^d f_i}{\otimes_{i=1}^d g_i}_{\hh} = \prod_{i=1}^d \scal{f_i}{g_i}_{\hh_0}$ \citep[see][]{aronszajn1950theory}. Now, since $g_k = \otimes_{i=1}^d  \psi_{k_i}$, we have $\scal{g_k}{g_{k'}}_{\hh} = \prod_{i=1}^d \scal{\psi_{k_i}}{\psi_{k_i'}}_{\hh_0} = \mathds{1}_{k = k'}.$

\subsection{Proof of \texorpdfstring{\cref{lem:kern-taylor-exp}}{Lemma 4.3}}

First, for $j \in \N_0$ define
$$Q_j(x,x') := e^{-\frac{\|x\|^2}{2\sigma^2}-\frac{\|x'\|^2}{2\sigma^2}} \frac{(x^\top x'/\sigma^2)^j}{j!}.$$
First note that, by multinomial expansion of $(x^\top x')^{j}$,
\eqals{
Q_j(x, x') &= \frac{e^{-\frac{\|x\|^2 + \|x'\|^2}{2\sigma^2}}}{\sigma^{2j}j!} \sum_{|t| = j} \binom{j}{t_1 \dots t_d} \prod_{i=1}^d (x^{(i)})^{t_i} ({x'}^{(i)})^{t_i} \\
& = \sum_{|t|=j} g_t(x) g_t(x').
}
Now note that, by Taylor expansion of $e^{x^\top x'/\sigma^2}$ we have
\eqals{
k(x,x') &= \sum_{j=0}^\infty Q_j(x,x') = \sum_{j=0}^\infty \sum_{|t|=j} g_t(x)g_t(x') \\
& = \sum_{k \in \N_0^d}g_k(x) g_k(x').
}
Finally, with $\phi$ defined as above, and the fact that $g_k$ forms an orthonormal basis for $\hh$, leads to
$$\scal{\phi(x)}{\phi'(x)}  =  \sum_{k \in \N_0^d}g_k(x) g_{k}(x') = k(x,x').$$

\subsection{Proof of \texorpdfstring{\cref{lm:cotter}}{Lemma 4.4}}
Here we use the same notation of the proof of Lemma~\ref{lem:kern-taylor-exp}. 
Since by Taylor expansion, we have that $k(x,x') = \sum_{j=0}^\infty Q_j(x,x')$,
by mean value theorem for the function $f(s) = e^{s/\sigma^2}$, we have that there exists $c \in [0, x^\top x']$ such that
\eqals{
|k(x,x') - \sum_{j=0}^M Q_j(x,x')|
 &= e^{-\frac{\|x\|^2+\|x'\|^2}{2\sigma^2}} \frac{c^{M+1}}{(M+1)!} \frac{d^{M+1}e^{\frac{s}{\sigma^2}}}{ds^{M+1}}|_{s=c} \\ 
& \leq \frac{(|x^\top x'|/\sigma^2)^{M+1}}{(M+1)!} \\
&\leq \frac{(R/\sigma)^{2M + 2}}{(M+1)!} \\
}
where the last step is obtained assuming $\|x\|,\|x'\| \leq R$.
Finally note that, by definition of $G_M$,
$$\sum_{g \in G_M} g(x) g(x') = \sum_{|k| \leq M}^M g_k(x) g_{k}(x') = \sum_{j=0}^M Q_j(x,x').$$

\section{Additional experiments}
\label{app:other_experiments}

 \begin{figure*}[t]
\centering
\includegraphics[scale=0.3]{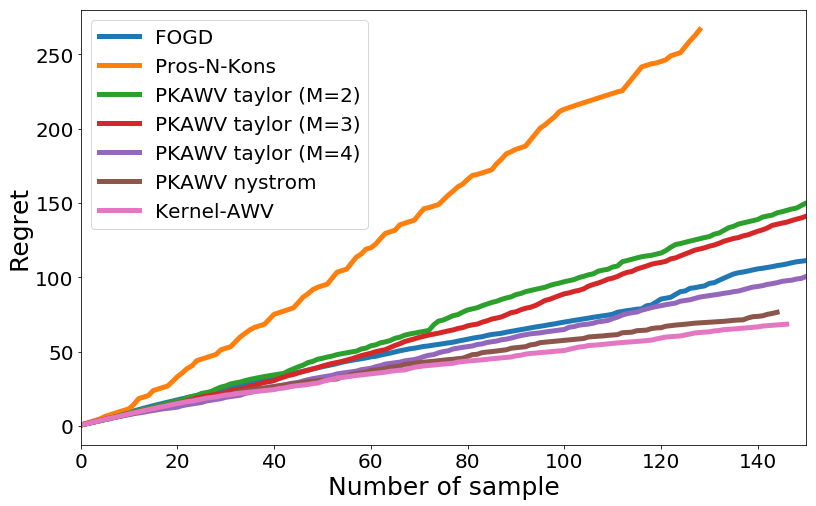}
\caption{Regret in adversarial setting.}
\label{fig:adv_regret}
\end{figure*}

 \begin{figure*}[t]
\centering
\includegraphics[scale=.25]{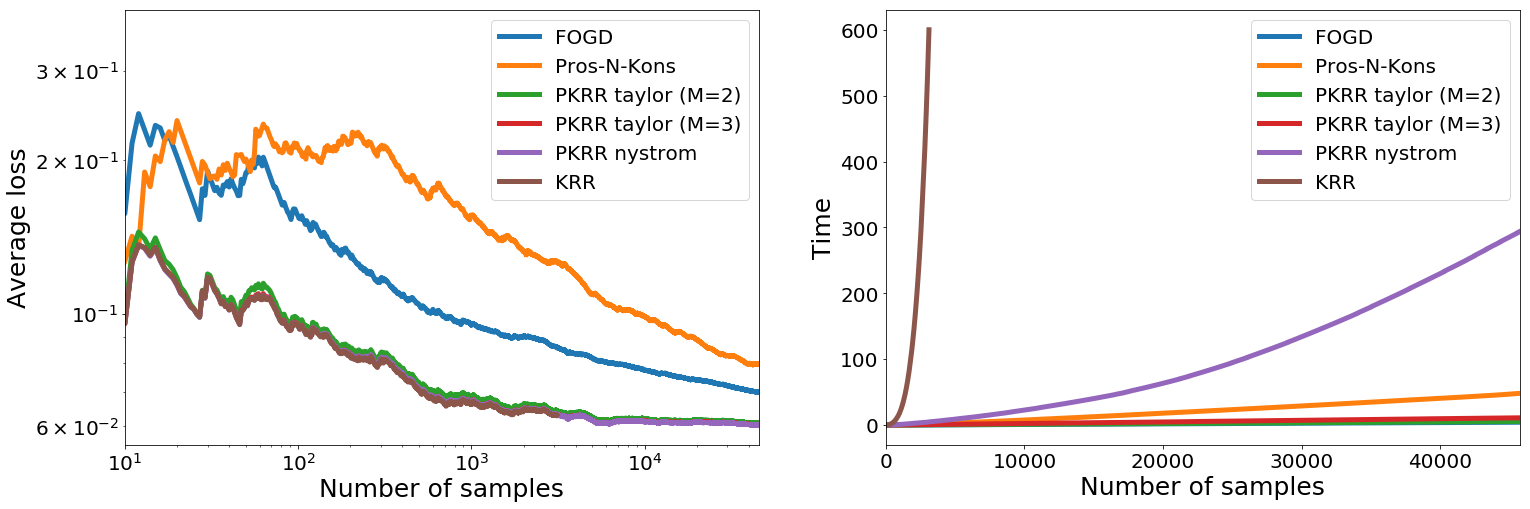}
\centering
\includegraphics[scale=.25]{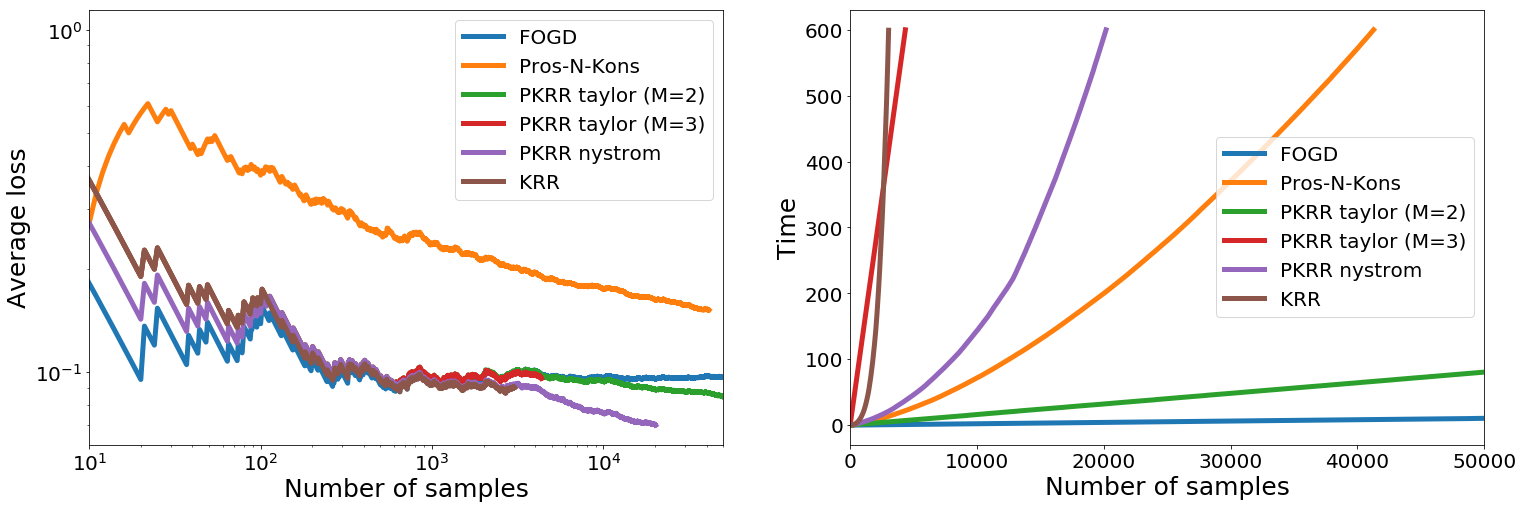}
\caption{Average loss and time on (top): regression casp ($n=4.5\times10^4,~~d=9$); (bottom) classification \texttt{ijcnn1} ($n=1.5\times10^5,~~d=22$).}
\label{casp_perf}\label{ijcnn1_perf}
\end{figure*}

\noindent{\bf Additional large scale datasets} (cf. Figure \ref{casp_perf}).
\label{sec:other_dataset}
We provides results on two additional datasets from UCI machine learning repository : \texttt{casp} (regression) and \texttt{ijcnn1}. See section \ref{sec:experiments} for more details.

\noindent{\bf Adversarial simulated data} (cf. Figure \ref{fig:adv_regret})
\label{sec:exp-adv}
In this experiment we produced the sequence $(x_t, y_t)_{t \in \N}$ adversarially on the regret function. In particular, given the learning algorithm, we use \texttt{scipy} as a greedy adversary i.e. at each step an optimization is done on the regret function to find $(x_t,y_t)$. On the right of Figure~\ref{fig:adv_regret}, we plot the  simulations until $n=80$, with $(x_t,y_t) \in [-1,1]^d \times [-1,1]$ where $d=5$. We see that \kawv{}, which does not use any approximation, leads to the best regret. Furthermore, \pkawv{} approximations converges very fast to the regret of \kawv{} when $M$ increases. The poor performance of Pros-N-Kons is likely because of its frequent restarts which is harmful when $n$ is small. On the contrary, FOGD has surprisingly good performance. We run the simulations up to $n=80$ for the high computational cost required by the adversary (especially for algorithms like \kawv{} or Pros-N-Kons).

\section{Efficient implementation of \pkawv{}}
\label{app:implementation}

\subsection{Pseudo-code}

Here, we detail how the formula~\eqref{eq:pkawv-def} can be efficiently computed for the projections considered in Section~\ref{sec:kernelproj}.

\paragraph{Fixed embedding} We consider fix sub-spaces $\tilde \cH_t = \tilde \cH$ induced fixed by the span of a fixed set of functions $G = \{g_1,\dots,g_r\} \subset \cH$ as analyzed in Section~\ref{sec:taylor}. Let denote by $\tphi:\X \to \R^r$ the map 
\eqal{\label{eq:tphi}
\tphi(x) &= Q^{-1/2} v(x),
}
with $v(x) = (g_1(x),\dots, g_r(x))$, and $Q \in \R^{r\times r}$ defined as $Q_{ij} = \scal{g_i}{g_j}_\hh$.
Then, computing the prediction $\hat y_t = \hat f_t(x_t)$ of \pkawv{} with
\[
    \hat f_t  \in  \argmin{f \in \tilde \cH = \mathrm{Span}(G)} \left\{ \sum_{s=1}^{t-1} \big(y_s - f(x_s)\big)^2 + \lambda \nor{f}^2 + f(x_t)^2 \right\}
\]
is equivalent to embedding $x_t$ in $\R^r$ via $\tphi$ and then performing linear AWV of \cite{AzouryWarmuth2001,Vovk01} with $\hat y_t = \hat w_t^\top \tphi(x_t)$
\[
    \hat w_t \in \argmin{w \in \R^r} \left\{ \sum_{s=1}^{t-1} \big(y_s - w^\top \tphi(x_s)\big)^2 + \lambda \nor{w}^2 + \big(w^\top \tphi(x_t)\big)^2 \right\} \,.
\]
This reduces the total computational complexity to $O(n r^2 + n r d + r^3)$ in time and $O(r^2)$ in space (see Algorithm~\ref{alg:pkrrfixed} for an efficient implementation).

\begin{algorithm}
{\bfseries Input}: $\lambda >0$, $\tilde \phi: \cX \to \R^r$ for $r \geq 1$ \\[2pt]
    {\bfseries Initialization}: $A_0^{-1} =\lambda^{-1} I_r$, $b_0 = 0$ \\[2pt]
    {\bfseries For} $t=1,\dots, n$
    \begin{itemize}[topsep=5pt,label=--,leftmargin=25pt,parsep=2pt]
        \item receive $x_t \in \cX$ 
        \item compute $v_t = \tphi(x_t) \in \R^r$
        \item update $A_t^{-1} = A_{t-1}^{-1} - \frac{(A_t^{-1} v_t)(A_t^{-1} v_t)^\top}{1+v_t^\top A_t^{-1}v_t}$
        \item predict $\hat y_t = \tilde \phi(x_t)^\top A_t^{-1} b_{t-1}$
        \item receive $y_t \in \R$
        \item update $b_t = b_{t-1} + v_t y_t$
    \end{itemize}
\caption{\pkawv with fixed embedding}
\label{alg:pkrrfixed}
\end{algorithm}

\paragraph{Nyström projections}
Here, we detail how our algorithm can be efficiently implemented with Nyström projections as considered in section~\ref{sec:nystrom}. If we implement naïvely this algorithm, we would compute $\alpha_t = (K_{t,m_t}^T K_{t,m_t} + \lambda K_{m_t, m_t})^{-1} K_{t,m_t}^T Y_t$ at each iteration. However, it would require $n \deff(\mu) + \deff(\mu)^3$ operations per iterations. We could have update this inverse with Sherman–Morrison formula and Woodbury formula. However, in practice it leads to numeric instability because the matrix can have small eigenvalues. Here we use a method described in \cite{rudi2015less}. The idea is to use the cholesky decomposition and  cholup which update the cholesky decomposition when adding a rank one matrix i.e. if $A_t = L_t^T L_t$ and $A_{t+1} = A_t + u_{t+1}u_{t+1}^T$ then $L_{t+1} = \textrm{cholup}(L_t, u_{t+1}, \textrm{'+'})$. Updating the cholesky decomposition with cholup require only $\deff(\mu)^2$ operations. So, \pkawv{} with nyström has a $O(n \deff(\mu) + \deff(\mu)^2)$ time complexity per iterations.

{\color{red}
\begin{algorithm}
{\bfseries Input}: $\lambda, \mu, \beta >0$, \\[2pt]
    {\bfseries Initialization}: $d_1 = $ \\[2pt]
    {\bfseries For} $t=1,\dots, n$
    \begin{itemize}[topsep=5pt,label=--,leftmargin=25pt,parsep=2pt]
        \item receive $x_t \in \cX$ 
        \item compute $z_t$ with KORS
        \item $K_t = (k(x_i, \tilde x_j))_{i\leq t, j \in \mathcal{I}_{t-1}}$
        \item $\mathcal{I}_t = \mathcal{I}_{t-1}$
        \item $a_t = (k(x_t,x_1),...,k(x_t, x_t))$
        \item $R_t = \textrm{cholup}(R_t, a_t, \textrm{'+'})$ \\[2pt]
        {\bfseries If} $z_t = 1$
        \begin{itemize}[topsep=5pt,label=--,leftmargin=25pt,parsep=2pt]
            \item $\mathcal{I}_t = \mathcal{I}_t \cup \{t\}$
            \item $K_t = (k(x_i, x_j))_{i\leq t, j \in \mathcal{I}_{t}}$
            \item $b_t = (k(x_t, x_j))_{j \in \mathcal{I}_{t}}$
            \item $c_t = K_{t-1}^T a_t + \lambda b_t$
            \item $d_t = a_t^T a_t + \lambda k(x_t, x_t)$
            \item $g_t = \sqrt{1 + d_t}$
            \item $u_t = (c_t/(1+g_t), g_t)$
            \item $v_t = (c_t/(1+g_t), -1)$
            \item $R_t = \left( 
            \begin{array}{cc}
                R_{t-1} & 0  \\
                0 & 0 \\
            \end{array}\right)$
            \item $R_t = \textrm{cholup}(R_t, u_t, \textrm{'+'})$
            \item $R_t = \textrm{cholup}(R_t, v_t, \textrm{'-'})$
        \end{itemize}
        \item $\alpha_t = R_t^{-1} R_t^{-T} K_t^T (Y_t, 0)$
        \item $b_t = (k(x_t, x_j))_{j \in \mathcal{I}_{t}}$
        \item predict $\hat y_t = b_t^T \alpha_t$
        \item receive $y_t \in \mathbb{R}$
        \item update $Y_t = (Y_{t-1}, y_t)$
    \end{itemize}
\caption{\pkawv with Nyström projections}
\label{alg:pkrrnystrom}
\end{algorithm}}

\newpage
\clearpage
\subsection{Python code}\label{app:python-code}

{
\footnotesize
\lstinputlisting{paper_code.tex}}

\end{document}